\def\1{\bm{1}}
\DeclareMathAlphabet{\mathsfit}{\encodingdefault}{\sfdefault}{m}{sl}
\SetMathAlphabet{\mathsfit}{bold}{\encodingdefault}{\sfdefault}{bx}{n}
\newcommand{\E}{\mathbb{E}}
\newcommand{\R}{\mathbb{R}}
\DeclareMathOperator{\Tr}{Tr}
\theoremstyle{plain}
\newtheorem{theorem}{Theorem}[section]
\newtheorem{proposition}[theorem]{Proposition}
\theoremstyle{definition}
\theoremstyle{remark}
\newcommand{\bx}{\mathbf{x}}
\newcommand{\cM}{\mathcal{M}}
\newcommand{\p}{\partial}
\newcommand{\SO}{\text{SO}}
\newcommand{\GL}{\text{GL}}
\newcommand{\cO}{\mathcal{O}}
\newcommand{\rank}{\text{rank }}
\newcommand{\ba}{\mathbf{a}}
\newcommand{\be}{\mathbf{e}}
\newcommand{\by}{\mathbf{y}}
\newcommand{\bbf}{\mathbf{f}}
\newcommand{\bW}{\mathbf{W}}
\newcommand{\tbx}{\widetilde{\bx}}
\newcommand{\tx}{\widetilde{x}}
\newcommand{\ty}{\widetilde{y}}
\newcommand{\bmu}{\mathbf{\boldsymbol{\mu}}}
\newcommand{\bs}{\mathbf{s}}
\newcommand{\hx}{\widehat{x}}
\newcommand{\bbeta}{\boldsymbol{\eta}}
\newcommand{\hz}{\widehat{z}}
\newcommand{\hbx}{\widehat{\bx}}
\newcommand{\bA}{\mathbf{A}}
\newcommand{\cN}{\mathcal{N}}
\newcommand{\bSigma}{\mathbf{\Sigma}}
\newcommand{\SE}{\text{SE}}
\newcommand{\g}{\mathfrak{g}}
\newcommand{\so}{\mathfrak{so}}
\newcommand{\cL}{\mathcal{L}}
\newcommand{\bcL}{\boldsymbol{\cL}}
\newcommand{\btau}{\boldsymbol{\tau}}
\newcommand{\btheta}{\boldsymbol{\theta}}
\newcommand{\bPi}{\mathbf{\Pi}}
\newcommand{\grad}{\nabla}
\newcommand{\bI}{\mathbf{I}}
\newcommand{\bzero}{\mathbf{0}}
\title{Diffusion Generative Modeling on Lie Group Representations}
\author{Marco Bertolini\thanks{Shared first authorship.}\ ,  Tuan Le\footnotemark[1] \ \& Djork-Arn\'e Clevert\\
Machine Learning Research\\ 
Pfizer Worldwide Research and Development \\
Friedrichstraße 110, 10117 Berlin, Germany\\
\texttt{\{marco.bertolini,tuan.le,djork-arne.clevert\}@pfizer.com}
}
\begin{document}

\maketitle

\begin{abstract}
We introduce a novel class of score-based diffusion processes that operate directly in the representation space of Lie groups.
Leveraging the framework of Generalized Score Matching, we derive a class of Langevin dynamics that decomposes as a direct sum of Lie algebra representations,
enabling the modeling of any target distribution on any (non-Abelian) Lie group.
Standard score-matching emerges as a special case of our framework when the Lie group is the translation group. 
We prove that our generalized generative processes arise as solutions to a new class of paired stochastic differential equations (SDEs), introduced here for the first time. 
We validate our approach through experiments on diverse data types, demonstrating its effectiveness in real-world applications such as $\text{SO}(3)$-guided molecular conformer generation and modeling ligand-specific global $\text{SE}(3)$ transformations for molecular docking, showing improvement in comparison to Riemannian diffusion on the group itself.
We show that an appropriate choice of Lie group enhances learning efficiency by reducing the effective dimensionality of the trajectory space and enables the modeling of transitions between complex data distributions.
\end{abstract}

\section{Introduction}\label{sec:introduction}

Deep probabilistic generative modeling amounts to creating data from a known tractable prior distribution. 
Score-based models \citep{hyvarinen2005estimation, sohl2015deep, ho2020denoising, huang2021variational, song2021maximum, song2020score} achieve this by learning to reverse a corruption process of the data.
Most algorithms assume an Euclidean data space $X$, yet many scientific applications \citep{brehmer2020flows, zhang2024diffpack, klimovskaia2020poincare, karpatne2018machine} involve distributions on curved manifolds $\mathcal{M}$. While significant progress has been made in developing the theory of diffusion in curved spaces \citep{de2022riemannian, huang2022riemannian}, key challenges remain: parametrizing vector fields on general $\cM$ is unsolved, and Langevin updates require projection to preserve the manifold structure.  
Even when $\mathcal{M} = G$ is a Lie group, denoising score-matching remains a challenge for general non-Abelian groups, thus necessitating explicit trajectory simulation. Recent findings \citep{abramson2024accurate} highlight this complexity, as diffusion was performed in raw Cartesian coordinates rather than explicitly modeling the torsion space, given its representational difficulty and lack of performance gain.  

An appropriate representation that leverages the symmetry property of the data should, however, enable models to better capture the underlying physical laws.
The limited performance of manifold-based diffusion must thus stem from technical and computational difficulties rather than fundamental principles. This work seeks to 
reconcile this expectation with the empirical findings
by addressing the question:  
\textit{Given a Lie group $G$ acting on Euclidean space $X$ through a map (representation)} 
\begin{wrapfigure}{r}{0.5\columnwidth}
\begin{center}
\includegraphics[width=0.5\columnwidth]{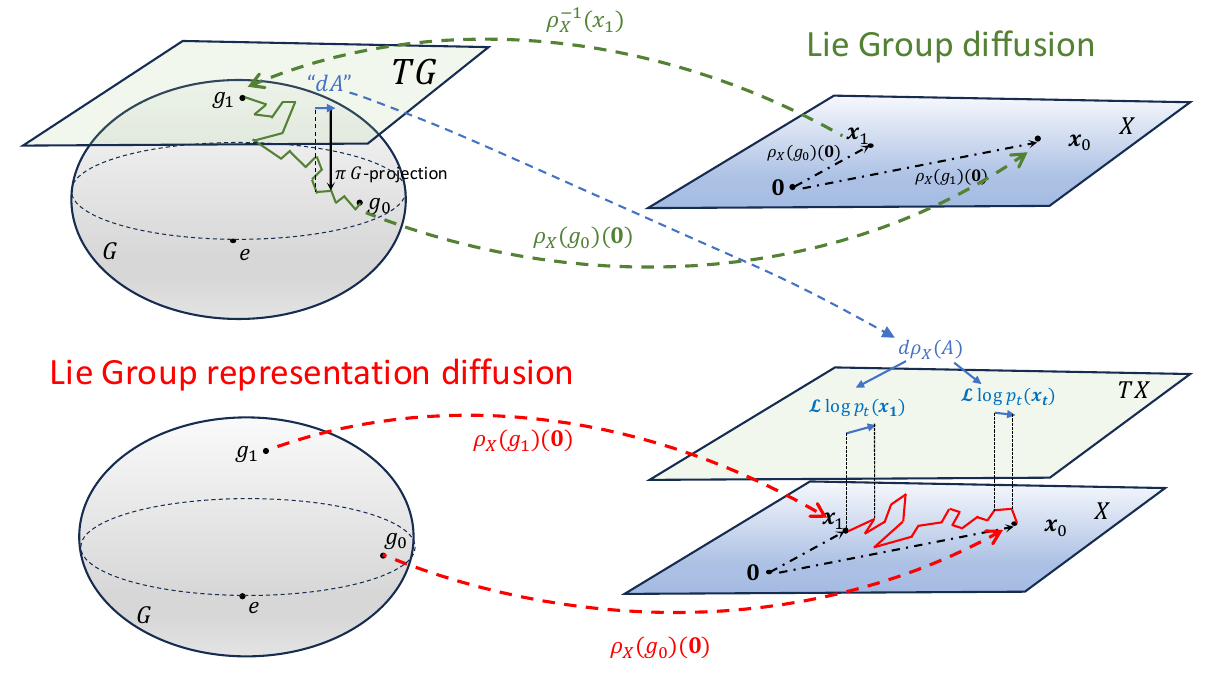}
\end{center}
\caption{
Comparison of strategies between Lie group (top) diffusion and our proposed Lie group representation diffusion (bottom).}
\label{fig:trajectories}
\end{wrapfigure}
\textit{$\rho_X:G\rightarrow \GL(X)$, can we construct a \textbf{generative process on $X$ that models any distribution on $G$}, thus retaining the advantages of flat-space diffusion while capturing non-trivial manifold structures?}
We address this question by constructing the diffusion process directly in the representation space, defined as the image of the group action map, $\mathrm{Im}(\rho_X) \subseteq \mathrm{GL}(X)$. This yields a matrix-valued diffusion process in $\mathrm{GL}(X)$ which, when applied to elements of $X$, induces a stochastic flow corresponding to infinitesimal Lie group transformations, i.e., Lie algebra elements. In this way, the process preserves the geometric inductive bias of the (curved) Lie group while remaining entirely within the flat vector space $X$.
Our construction builds on the framework of \emph{Generalized Score Matching} (GSM)~\citep{lyu2009interpretation, lin2016estimation}, which estimates probability densities via the generalized score function $\mathcal{L} \log p(\bx)$ for a suitable linear operator $\mathcal{L}$. We show that the $G$-induced generative process satisfies a continuous-time stochastic differential equation (SDE) involving this generalized score.
As illustrated in Figure~\ref{fig:trajectories}, our approach differs to diffusion processes directly on the Lie group: rather than mapping data to the group and back via the representation map, we remain in $X$ throughout, using the differential of the representation $d\rho_X: TG \rightarrow TX$ to guide the Langevin dynamics.

In short, \textit{we propose an exact SDE-based diffusion framework that enables general generative modeling on Lie group representations}, thus combining the advantages of curved dynamics with the theoretical and practical effectiveness of Euclidean diffusion. 
Our method realizes \textbf{simulation-free} training of Lie group-like diffusion models, and it provides a novel approach to denoising score-matching for general non-Abelian groups.
Our main contributions can be summarized as follows:

\textbf{Generalized score matching via Lie algebras:} We 
extend generalized score matching on $X$ to estimate the score of any distribution on a Lie group $G$ acting on $X$.
We elucidate the conditions for a suitable $G$ (valid for any differentiable manifold $X$).
We recover standard score-matching as a specific case of our framework, corresponding to the group $G = T(n)$ of translations on $X=\R^n$.

\textbf{Lie group representation diffusion processes as exact solution of a novel class of SDEs:} We introduce a \textit{new class of solvable} SDEs that govern Lie group diffusion via Euclidean coordinates, significantly expanding the range of processes that can be addressed using score-based modeling techniques. We also show that our approach extends naturally to flow matching (Appendix \ref{app:flow_matching}).

\textbf{Dimensionality reduction, bridging non-trivial distributions and trajectory disentanglement:} 
Through extensive experiments, we demonstrate that: (1) our approach can estimate, regardless of the choice of $G$, any probability density (Sections \ref{ss:toys} (2,3,4d distributions) and \ref{ss:QM9} (QM9); (2) by appropriately selecting $G$ to align with the data structure, the learning process is significantly simplified, effectively reducing its dimensionality (Section \ref{ss:MNIST}(MNIST)) (3) our framework enables solutions to processes that are challenging or unfeasible with standard score matching, such as bridging between complex data-driven distributions (Section \ref{ss:MNIST} (MNIST) and \ref{ss:crossdocked} (CrossDocked)).

\section{Diffusion dynamics through Lie algebras}

We start this section by setting up notation and review the connection between vector fields and Lie algebra actions on manifolds. 
A Lie group $G$ is a group that is also a finite-dimensional differentiable manifold, such that the group operations of multiplication $\cdot:G\times G \rightarrow G$ and inversion are $C^\infty$-functions\footnotemark[2]\footnotetext[2]{We restrict ourselves to real Lie groups.  
It would be interesting to extend our analysis to the complex case \citep{le2021parameterized}.}. 
A Lie algebra $\g$ is a vector space equipped with an operation, the Lie bracket, $[,]:\g \times\g \rightarrow\g$,
satisfying the Jacobi identity. 
Every Lie group gives rise to a Lie algebra as its tangent space at the identity, $\g = T_eG$, and the Lie bracket is the commutator of tangent vectors, $[A,B] = AB - BA$.
In this work, we are interested in how Lie groups and Lie algebras act on spaces. 
Given a manifold $X$, a (left) \textbf{group action} of $G$ on $X$ is an associative map\footnote{In the manuscript we adopt both notations $\rho_X(g)(\bx)$, derived from defining $\rho_X:G\rightarrow \GL(X)$, and $\rho_X(g,\bx)$, derived from the definition $\rho_X:G\times X\rightarrow X$, which are obviously equivalent.} $\rho_X: G \times X \rightarrow X$ such that 
$\rho_X(e,x)=x, \forall x\in X$.
Fundamental concepts associated with a group action are the ones of orbits and stabilizers. The \textbf{orbit} of $\bx\in X$ is the set of elements in $X$ which can be reached from $\bx$ through the action of $G$, i.e., $G\cdot \bx = \{ \rho_X(g)(\bx),~ g\in G \}$. The \textbf{stabilizer} subgroup of $G$ with respect to $\bx$ is the set of group elements that fix $\bx$, $G_{\bx} = \{ g\in G | \rho_X(g)(\bx) = \bx \}$.
The \textbf{action of a Lie algebra} on $X$,
$\mathfrak{A}: \g \rightarrow \text{Vect}(X)$ is a Lie algebra homomorphism and maps elements of $\g$ to vector fields on $X$ such that the map $\g \times X \rightarrow TX, (A, \bx) \mapsto \mathfrak{A}(A)(\bx)$ is smooth. 
\begin{figure}[t]
    \centering
    \includegraphics[width=\textwidth]{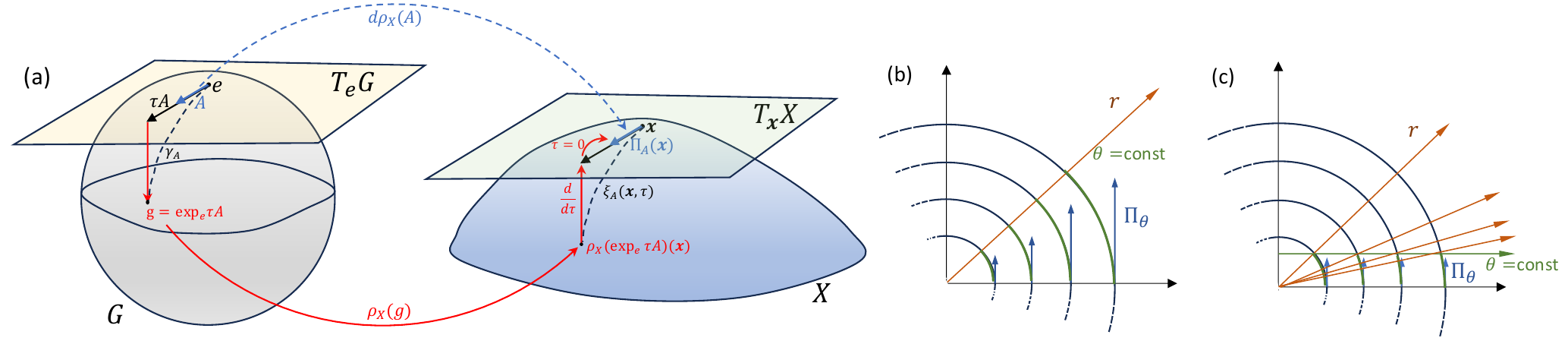}
    \caption{
    (a) Depiction of the fundamental vector field definition \eqref{eq:fund_vectorfields}.
    Flow coordinates for a pair of commuting (b) and not-commuting ones vector fields (c).}
    \label{fig:fundamental}
    \vspace{-0.1cm}
\end{figure}
Given $A\in\g$ and a group action $\rho_X$, the \textbf{flow} on $X$ induced by $\rho_X$ is given by $\xi_A: X \times \R \rightarrow X, (\bx, \tau) \rightarrow \rho_X\left(\exp(\tau A)\right)(\bx)$, where the map $\exp:\g \rightarrow G$ is defined by $\exp(A) = \gamma_A(1)$, where $\gamma_A \colon \mathbb R \to G$
is the unique one-parameter subgroup of $G$ whose tangent vector at the identity is $A$. 
The \textbf{infinitesimal action} of $\g$ on $X$, $d\rho_X: \g \rightarrow \text{Vect}(X)$, is defined as the differential of the map $\rho_X$, that is,
\begin{align}
\label{eq:fund_vectorfields}
    d\rho_X
    : A \mapsto \frac{d}{d\tau}\bigg|_{\tau=0} \rho_X (\exp(\tau A)) (\bx) \equiv \Pi_A(\bx)~.
\end{align}
$\Pi_A$ is called the \textbf{fundamental vector field} corresponding to $A\in \g$.
Given a fixed point $\bx_0\in X$, we denote $\tau = \xi_A(\bx_0)^{-1}(\bx)$ the \textbf{fundamental flow coordinate}, which is the parameter such that applying the flow to $\bx_0$ gives $\bx$.
Central to our discussion is the fact that any smooth vector field $V: X \rightarrow TX$ on $X$ can be interpreted as a differential operator acting on smooth functions $f:X \rightarrow \R$. The operator $V(f)$ represents the directional derivative of $f$ at $\bx\in X$ in the direction of $V(\bx)$. We denote $\cL_A=\Pi_A \cdot \nabla$ the differential operator corresponding to $\Pi_A$. In the following we will use both $\Pi_\tau$ and $\Pi_A$ interchangeably, when no potential confusion arises.
When $\dim\g>1$ we indicate as $\bPi(\bx) = \begin{pmatrix}
    \Pi_{A_1} & \Pi_{A_2} & \cdots
\end{pmatrix}$ the matrix of the collection of fundamental vector fields.

Let us work out the example for $ X = \mathbb{R}^2 $ and $ G = \SO(2) $, the group of rotations in the plane. 
The Lie algebra $\mathfrak{so}(2)$ 
consists of all matrices of the form
$
A_\alpha = \begin{pmatrix}
0 & -\alpha \\
\alpha & 0
\end{pmatrix},
$
where $ \alpha\in\R$, and the Lie bracket is identically zero.
The flow on $X$ induced by $\rho_X$ is given by the exponential map
{\small
$
\rho_{\R^2}(\exp(\tau A_\alpha))(\bx) = \begin{pmatrix}
\cos(\alpha \tau) & -\sin(\alpha \tau) \\
\sin(\alpha \tau) & \cos(\alpha \tau)
\end{pmatrix} \bx$}, 
and without loss of generality we can set $\alpha=1$. 
The infinitesimal action is computed as
\begin{align}
\nonumber
d\rho_{\R^2}(A) 
= \frac{d}{d\tau}\bigg|_{\tau=0} \begin{pmatrix}
\cos \tau & -\sin \tau \\
\sin\tau & \cos \tau
\end{pmatrix}
\begin{pmatrix}
x_1 \\
x_2
\end{pmatrix} = \begin{pmatrix}
- x_2 \\
x_1
\end{pmatrix}~.
\end{align}
and thus the fundamental vector field defines the derivation $\cL_{A}(\mathbf{x}) = - x_2 \frac{\partial}{\partial x_1} + x_1 \frac{\partial}{\partial x_2}$.
Let  $\bx_0 \in \R^2$
be a fixed point, then the flow equation $\bx(\tau) \equiv \xi_{A}(\bx_0, \tau) = \rho_{\R^2}(\exp(\tau A),\bx_0)$
 gives a system of two equations, which we can solve to find the expression of the fundamental flow coordinate
\begin{align}
\begin{cases}
    \bx \cdot \bx_0 &= |\bx_0|^2 \cos\tau~, 
    \\
    \bx \times \bx_0 &= |\bx_0|^2 \sin\tau~, 
\end{cases} \ \ \Rightarrow \ \tau = \arctan \frac{\bx \times \bx_0}{\bx \cdot \bx_0}~,
\quad \text{where} \ \bx \times \by = y_2x_1 - x_1y_2~.
\end{align}
Note that
$\frac{\p}{\p\tau} = \frac{\p x_1}{\p \tau} \frac{\p}{\p x_1} + \frac{\p x_2}{\p \tau} \frac{\p}{\p x_2}
    = - x_2 \frac{\p}{\p x_1} + x_1 \frac{\p}{\p x_2} = \Pi_{A}(\bx)^\top \nabla = \cL_{A}~.$

\subsection{Intuition behind Lie group-induced generalized score matching}

Score matching aims at estimating a (log) probability density $p(\bx)$ by learning to match its score function, i.e., its gradient in data space. Generalized score matching replaces the gradient operator with a general linear operator $\bcL$ \citep{lyu2009interpretation}. The learning objective is given by minimizing the generalized Fisher divergence
\begin{align}
\label{eq:generalized_fisher}
    D_{\bcL}(p||q_\theta) = \int_X p(\bx) \left|\bcL \log p(\bx) - \bs_\theta(\bx) \right|^2 d\bx~,
\end{align}
where $\bs_\theta = \bcL\log q_\theta$. The requirement on the choice of $\bcL$ is that it preserves 
all the information about the original density. Formally, we require $\bcL$ to be \textit{complete}, that is, given two densities $p(\mathbf{x})$ and $q(\mathbf{x})$, $\bcL p(\mathbf{x}) = \bcL q(\mathbf{x})$ (almost everywhere \footnote{Almost everywhere means everywhere except for a set of measure zero, where 
we assume the standard Lebesgue measure.}) implies that $p(\mathbf{x}) = q(\mathbf{x})$ (almost everywhere). 

Given a Lie group $G$ acting on $X$, the collection of fundamental fields $\mathbf{\Pi}$ corresponding to a choice of basis $\bA=(A_1, A_2, \dots)$ of $\g$ is a linear operator, thus potentially suitable for score-matching.
It is then natural to set $\bcL$ to the derivation associated with the fundamental fields $\bPi$, i.e., $\bcL = \bPi(\bx)^\top \nabla$. 
It then follows that $\bcL \log p(\bx)$ computes the directional derivatives
of $\log p(\bx)$ with respect to the fundamental flow coordinates $\boldsymbol{\tau}$,
and provided that $\mathbf{\Pi}$ meets some consistency conditions (which we will address in the next section), we can employ $\bcL \log p(\mathbf{x})$ to sample from $p(\mathbf{x})$ using Langevin dynamics:
\begin{align}
\label{eq:langevin_schematic}
    \bx_{t+1} &= 
    \bx_{t} - \bs_\theta(\bx_t) d\rho_X(\exp(\btau \bA) )(\bx_t)
    = \bx_t - \sum_i\underbrace{\cL_i \log p_t(\bx_t)}_{\text{generalized scores}}  \underbrace{ \Pi_{A_i}(\bx_t)}_{\text{$A_i$ directions}}\Delta t~,
\end{align}
where $\Delta t $ is the step size and we have temporarily set aside stochasticity and denoising aspects.
This process mirrors the intuition depicted in Figure \ref{fig:trajectories}: each infinitesimal step of the dynamics corresponds to infinitesimal transformations along the flow on $X$ induces by the $G$-action, and each component of the generalized score is learned through maximum likelihood over the orbits $\xi_{A_i}$ of the corresponding transformations.

\subsection{Sufficient conditions for Lie group-induced generalized score matching}
\label{ss:g_conditions}

We now address the properties our setup ($X$, $G$, $\g$, $\bPi$) must satisfy to meet the sufficient conditions for score-matching and Langevin dynamics. We note that these results hold for any differentiable manifold $X$. Proofs for these results can be found in Appendix \ref{app:proofs_conditions}.
\paragraph{Condition 1: Completeness of $\bPi$.} We start by establishing an algebraic-geometric condition for $\bPi$'s completeness: 
\begin{proposition}
\label{prop:Pi_complete}
    The linear operator $\bPi(\bx)$ is complete if $\bPi$ is the local frame of a vector bundle $E$ over $X$ whose rank is $n\geq \dim X$ almost everywhere. If $\rank\ E = n$ everywhere, then $E=TX$, the tangent bundle of $X$. 
\end{proposition}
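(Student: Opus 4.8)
The plan is to reduce completeness of the operator $\bcL = \bPi(\bx)^\top\nabla$ to the elementary fact that a sufficiently regular function on a connected manifold with vanishing gradient is constant. Take two (positive) densities $p,q$ with $\bcL\log p = \bcL\log q$ almost everywhere, and set $h \defeq \log p - \log q$. By linearity of $\bcL$ this gives $\bPi(\bx)^\top\nabla h(\bx) = 0$, i.e.\ $\langle \Pi_{A_i}(\bx),\nabla h(\bx)\rangle = 0$ for every basis element $A_i$ and for a.e.\ $\bx$.

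The key step is then pointwise linear algebra. By hypothesis the fundamental fields $\Pi_{A_i}$ span, off a null set, the fibers of a sub-bundle $E\subseteq TX$ of rank $n\geq\dim X$; since each fiber $E_\bx$ sits inside $T_\bx X$, this forces $E_\bx = T_\bx X$ there, so in any local chart the $\dim X\times\dim\g$ matrix $\bPi(\bx)$ has full row rank. Hence the map $v\mapsto\bPi(\bx)^\top v$ is injective on $T_\bx X$, and the previous display yields $\nabla h(\bx)=0$ for a.e.\ $\bx$. Assuming $X$ connected and $h$ regular enough that this forces $h$ to equal an a.e.\ constant $c$ — automatic for the Gaussian-smoothed marginals arising in the diffusion setting — we obtain $p = e^c q$ a.e., and integrating over $X$ gives $e^c=1$, i.e.\ $p=q$ a.e. This establishes completeness. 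The final assertion is then immediate: if $\rank E = n$ everywhere, then $E$ is a rank-$n$ sub-bundle of the rank-$(\dim X)$ bundle $TX$ with $n\geq\dim X$, which forces $n=\dim X$ and $E_\bx=T_\bx X$ for every $\bx$, i.e.\ $E=TX$.

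The one genuinely delicate point is the passage from "$\nabla h=0$ almost everywhere" to "$h$ constant." It requires connectedness of $X$ — on a disconnected $X$ a piecewise-constant $h$ is a bona fide counterexample to completeness, so this hypothesis cannot be dropped — and enough regularity of $\log p-\log q$ to exclude pathological functions with a.e.-vanishing weak gradient; both are mild and should be recorded as standing assumptions. The null set on which $\bPi$ may drop rank is harmless, since a continuous (indeed $W^{1,1}_{\mathrm{loc}}$) function with a.e.-zero gradient is still constant on each connected component.
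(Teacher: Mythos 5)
Your argument follows the same route as the paper: from $\bcL\log p = \bcL\log q$ deduce $\bPi(\bx)^\top\nabla\log(p/q) = 0$ a.e., invoke the almost-everywhere full rank of $\bPi$ to conclude $\nabla\log(p/q) = 0$ a.e., and then use normalization of the densities to force the constant to be $1$ and hence $p = q$. You go slightly further by flagging the implicit connectedness and regularity hypotheses needed to pass from an a.e.-vanishing gradient to a genuine constant, and by explicitly dispatching the $E = TX$ clause, but these are refinements of, not departures from, the paper's proof.
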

The following result specifies which Lie groups yield operators $\bPi$ satisfying the above proposition:
\begin{proposition}
\label{prop:main}
    The operator $\bPi$ induced by $\g$ is complete if and only if 
    the subspace $U\subseteq X$ such that $\dim \frac{G}{G_\bx} < n$ for $\bx\in U$, where $n=\dim X$, has measure zero in $X$.
\end{proposition}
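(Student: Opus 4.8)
The plan is to reduce Proposition~\ref{prop:main} to Proposition~\ref{prop:Pi_complete} by identifying the pointwise rank of $\bPi$ with the dimension of the $G$-orbit through $\bx$. Fix a basis $\bA=(A_1,A_2,\dots)$ of $\g$ and regard $\bPi(\bx)$ as the linear map $\g\to T_\bx X$, $A_i\mapsto\Pi_{A_i}(\bx)$. By \eqref{eq:fund_vectorfields} this is precisely the differential at the identity of the orbit map $g\mapsto\rho_X(g)(\bx)$ (using $d\exp|_0=\mathrm{id}$), so its image is the tangent space to the orbit. Since the orbit $G\cdot\bx$ is an immersed submanifold diffeomorphic to $G/G_\bx$, we obtain
\begin{align}
\rank \bPi(\bx) = \dim T_{\bx}(G\cdot\bx) = \dim(G\cdot\bx) = \dim\tfrac{G}{G_\bx}~.
\end{align}
As $\bPi(\bx)$ takes values in $T_\bx X$, its rank is at most $n=\dim X$; hence $\rank\bPi(\bx)=n$ exactly when $\bx\notin U$, and $\rank\bPi(\bx)<n$ exactly when $\bx\in U$. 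Therefore ``$U$ has measure zero in $X$'' is equivalent to ``$\rank\bPi(\bx)=n$ for almost every $\bx$.''

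For the ``if'' direction, assume $U$ is null. The set $V=\{\bx:\rank\bPi(\bx)=n\}=X\setminus U$ is open (rank is lower semicontinuous) and of full measure, and on $V$ the fundamental fields span $T_\bx X$ at each point, so locally $n$ of them form a frame of $TX$. Thus $\bPi$ is a local frame of the rank-$n$ bundle $TX$ over the full-measure open set $V$, and Proposition~\ref{prop:Pi_complete} yields completeness of $\bPi$.

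For the ``only if'' direction we argue by contrapositive: if $U$ has positive measure we produce distinct densities with the same image under $\bPi$, equivalently a smooth, compactly supported, nonconstant $h$ with $\cL_{A_i}h=0$ for all $i$ (adding $\epsilon h$ to any fixed density then breaks completeness). Reducing to the case where $U$ has nonempty interior (automatic, e.g., when the action is real-analytic, in which case $U$ is an analytic subset and positive measure forces $U$ to contain a component of $X$), let $k<n$ be the maximal orbit dimension attained on $U^\circ$; on the open set where this maximum is achieved the span of $\bPi$ is a regular rank-$k$ distribution, which is involutive because $[\Pi_{A_i},\Pi_{A_j}]=\pm\Pi_{[A_i,A_j]}$ lies in that span. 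By Frobenius it integrates to a $k$-dimensional foliation, so near a point there is a submersion $\phi$ onto an $(n-k)$-dimensional transversal whose fibres are leaves; for nonconstant $\psi$ supported in a small ball of the transversal, $h=\psi\circ\phi$ works, since $\Pi_{A_i}$ is tangent to the fibres and hence $\cL_{A_i}h=d\psi\cdot d\phi(\Pi_{A_i})=0$.

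The main obstacle is this ``only if'' direction: one must convert rank-deficiency of $\bPi$ on a positive-measure set into a genuine nonconstant invariant function, which requires control of the (possibly singular) orbit foliation of $U$ — concretely, that the orbit dimension is locally constant on a large open piece of $U$ so that Frobenius applies. Under mild regularity (real-analyticity of the action, or simply restricting to applications where $U$ is a submanifold) this is routine; in full generality it follows from the orbit theorem for families of vector fields. The ``if'' direction, by contrast, is immediate from the rank identity above together with Proposition~\ref{prop:Pi_complete}.
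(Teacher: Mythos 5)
Your reduction via the rank identity $\rank \bPi(\bx)=\dim G/G_\bx$ is the same observation with which the paper's proof opens, and your ``if'' direction (null $U$ implies completeness, via Proposition~\ref{prop:Pi_complete} applied to a full-measure open set where $\bPi$ frames $TX$) is the route the paper takes as well. The genuine divergence is in the ``only if'' direction. The paper dispatches it in a single sentence — ``Suppose $\bPi$ is complete. Then, from Proposition~\ref{prop:Pi_complete} the rank of $\bPi(\bx)$ is $\geq n$ almost everywhere'' — but Proposition~\ref{prop:Pi_complete} is stated and proved as a one-way implication (full rank a.e.\ $\Rightarrow$ complete), so the paper is implicitly invoking its converse without proof. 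Your contrapositive argument, constructing a nonconstant $\bPi$-annihilated function $h$ by applying Frobenius to the involutive distribution $\mathrm{span}\{\Pi_{A_i}\}$ on a rank-constant open set and then perturbing a density by $\epsilon h$ to exhibit two densities with identical generalized scores, is a genuine proof of the direction the paper leaves implicit and is the right strategy; in that sense your proof is more complete than the published one.

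The residual gap you flag yourself is real and not cosmetic: passing from ``$U$ has positive Lebesgue measure'' to ``$U$ contains a nonempty open set of constant orbit dimension'' does not follow from lower semicontinuity of $\rank \bPi$ alone. The rank strata $R_k=\{\bx:\rank\bPi(\bx)=k\}$ for $k<n$ can each be nowhere dense while their union still has positive measure, so Frobenius has no regular open set to act on. Your escape hatches (real-analyticity of the action, or assuming $U$ is a submanifold) are extra hypotheses not present in the proposition; in the merely smooth setting one would need something like the orbit theorem of Nagano--Sussmann or an orbit-type stratification to extract the required open leafwise-constant piece. So the accurate verdict is: your ``if'' direction coincides with the paper's, your ``only if'' direction is a more honest argument than the paper's unsupported converse, and both proofs as written rest on an unstated regularity assumption that the proposition needs in order to hold in full generality.
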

As an example, consider standard score-matching on mass-centered point clouds. Here $X = \mathbb{R}^{3N-3}$, since the points' coordinates satisfy $\sum_{i=1}^N \mathbf{x}_i = 0$. Without loss of generality, $X$ can be parametrized by $\mathbf{x}_{1,\dots,{N-1}}$, with $\mathbf{x}_N$ determined by the center of mass condition. The group $G = T(3N)$ acts transitively on $X$, with a 3-dimensional stabilizer subgroup $G_X = \{(0, \dots, 0, \mathbf{a})^\top \in \mathbb{R}^{3N} \}$ fixing the space. Thus, $\dim G/G_X = n$ for all $\mathbf{x} \in X$, 
satisfying Proposition \ref{prop:main}.

\paragraph{Condition 2: Homogeneity of $X$.}While the completeness of the operators is necessary for estimating the target density, it is not sufficient to ensure that the Langevin dynamics \eqref{eq:langevin_schematic} will behave appropriately, as the following example illustrates. 
Let $X=\R$, and $G=\R^\ast_+$, the multiplicative group of non-zero positive real numbers.
The orbits under the action $\rho_X(a, x) = ax$ are $\cO_+ = (0, \infty)$, $\cO_-=(-\infty, 0)$, and $\cO_0= \{0\}$.
If the dynamics begins within $\cO_+$, it will be never be able to reach values in $\cO_-$, as $G$-transformations cannot move the system outside its initial orbit. We therefore ask that each pair of points of $X$ is connected through the $G$ action. This amounts to require that $X$ is \textbf{homogeneous} for $G$, that is, $\forall \bx, \by \in X$ there exists a $g \in G$ such that $\rho_X(g)\bx = \by$.
We note that this condition solely ensures that the generation outcome is independent of the initial sampling condition, that is, that Langevin dynamics can generate any point of the target distribution from any point of the prior. Beyond this, the formalism remains fully applicable in the non-homogeneous case, where the dynamic is restricted to orbits of the group, effectively partitioning the distribution. Though the formalism still applies within each orbit, global generation across $X$ would not be supported without homogeneity.

\paragraph{Condition 3: Commutativity of $\bPi$.}The final requirement is that $\bPi$ forms a (locally) \textbf{commuting} frame of vector fields, $[\cL_A, \cL_B]f(\bx)=0$ $\forall A,B$ and $\forall f\in C^{\infty}(X)$.
In this case, the coordinates $\tau_i$'s are orthogonal, and their flows commute, meaning the orbits parametrized by $\tau_i$ correspond to $ \{ \tau_j = 0 \}_{j \neq i} $. For non-commuting flows this is not the case, as Figure \ref{fig:fundamental}b-c illustrates: (b) $V_1 = x_1\partial_{x_1} + x_2\partial_{x_2},\ V_2 = x_1\partial_{x_2} - x_2\partial_{x_1}$ satisfy $[V_1, V_2] = 0$, and the orbits parametrized by $\tau_1=r$ correspond to subspaces with constant $\tau_2 = \theta$; (c) $W_{1,2} = V_{1,2} / |\mathbf{x}|$ do not commute, and the loci $\theta = \text{const}$ no longer coincide with the $r$-orbits, causing $\theta$ to vary along these, despite the fact that $r,\theta$ are still orthogonal at each point.
This last condition ensures that the updates governed by the different elements $A_i$ of $\g$ in \eqref{eq:langevin_schematic} remain independent of one another. 
Notably, this does not exclude non-Abelian groups; even if $A_{1,2}\in\g$ do not commute in the Lie algebra, their flows on $X$ can, as shown in the $\g=\so(3)$ example in Appendix \ref{appendix:spherical-so(3)-dilation}.

\section{Lie algebra score-based generative modeling via SDEs}

In this section, we formalize the framework we developed above
from the point of view of SDEs. Namely, we show that there exists a class of SDEs, which, when reversed, can generate data according to dynamics similar to \eqref{eq:langevin_schematic}, guided by the generalized score of the fundamental vector fields of the Lie algebra $\g$. 
Our main result is the following.
\begin{theorem}
\label{th:main_theorem}
    Let $G$ be a Lie group acting on $X$ satisfying the conditions of Section \ref{ss:g_conditions}, and let $\g$ be its Lie algebra. 
    The pair of SDEs
    {\small
    \begin{align}
    \label{eq:SDEforward}
     d\bx & =
        \left[\beta(t)\bPi(\bx) \bbf(\bx) +\frac{\gamma(t)^2}{2} \rho_X(\Omega) \right]dt  + \gamma(t) \bPi(\bx) d\bW~,\\
        \label{eq:SDEbackward}
              d\bx &= \left[ \beta(t)\bPi(\bx)\bbf(\bx)  - \frac{\gamma^2(t)}2 \rho_X(\Omega)
     -\gamma^2(t) \bPi(\bx) \nabla^\top \cdot \bPi(\bx)
     \right. \nonumber\\
     &\qquad\qquad\qquad\qquad\qquad\qquad \left.
     - \gamma(t)^2\bPi(\bx)\bcL \log p_t(\bx)\right] 
           dt +\gamma(t) \bPi(\bx)  d\bW ~,
\end{align}}
where $\beta,\gamma:\R \rightarrow \R$ are time-dependent functions, $\bPi: \R^n \rightarrow \R^{n\times n}$ the fundamental vector fields,  $\bbf:\R^n \rightarrow \R^n$ the drift, $\Omega = \sum_i A_i^2$ is known as the quadratic Casimir element of $\g$, and $\bcL=\bPi(\bx)^\top \nabla $, is such that
\begin{enumerate}
    \item The forward-time SDE \eqref{eq:SDEforward} is exactly solvable: 
    {\small
    \begin{align}
    \label{eq:SDE_solution}
        \bx(t) = \left(\prod_i O_i(\tau_i(t)) \right)\bx(0) = \left( \prod_{i=1}^ne^{\tau_i(t) A_i} \right) \bx(0)~, 
    \end{align}
    }
    where $O_i=e^{\tau_i(t) A_i}$ is the finite group action and $\btau(t)$ is the solution to the SDE
    \begin{align}
    \label{eq:forward-sde-flow-coordinates}
        d\btau(\bx) 
        & = \beta(t)\bbf(\bx)dt +\gamma(t) d\bW ~.
    \end{align}
    \item The SDE \eqref{eq:SDEbackward} is the reverse-time process of \eqref{eq:SDEforward}.
    \item The Langevin dynamic of the above SDEs decomposes as a direct sum of $\g$ infinitesimal actions \eqref{eq:fund_vectorfields}, defining an infinitesimal transformation along the flows $\xi_{\btau}$.
\end{enumerate}
\end{theorem}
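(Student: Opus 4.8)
The three claims are of quite different natures, so I would tackle them in the order (1), (3), (2), since (1) sets up the change-of-variables that makes the rest transparent, and (2) is cleanest once one sees the process as a pushforward of a trivial one.

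\textbf{Step 1 (solving the forward SDE).} The key idea is that, because $\bPi$ is a commuting frame of fundamental vector fields (Condition 3), the map $\btau \mapsto \bx = \bigl(\prod_i e^{\tau_i A_i}\bigr)\bx(0)$ is well-defined independently of ordering, and its differential is exactly $\bPi(\bx)$; i.e.\ $\partial \bx / \partial \tau_i = \Pi_{A_i}(\bx)$. So I would posit that $\bx(t)$ is obtained from the auxiliary process $\btau(t)$ solving the simple SDE \eqref{eq:forward-sde-flow-coordinates} via this exponential map, and verify this by applying the multivariate It\^o formula to $\bx(\btau(t))$. The first-order term gives $\sum_i \Pi_{A_i}(\bx)\,d\tau_i = \beta(t)\bPi(\bx)\bbf(\bx)\,dt + \gamma(t)\bPi(\bx)\,d\bW$, matching the first and last terms of \eqref{eq:SDEforward}. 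The It\^o correction $\tfrac12\sum_{i,j}\partial^2\bx/\partial\tau_i\partial\tau_j\,d\langle\tau_i,\tau_j\rangle = \tfrac{\gamma(t)^2}{2}\sum_i \partial^2\bx/\partial\tau_i^2$ must be shown to equal $\tfrac{\gamma(t)^2}{2}\rho_X(\Omega)$; here $\partial^2\bx/\partial\tau_i^2 = \rho_X(A_i^2)\bx$ (second derivative of the one-parameter group), and summing over $i$ gives the Casimir $\Omega = \sum_i A_i^2$ acting via $\rho_X$. Commutativity of the flows is what lets me drop the cross terms $\partial^2\bx/\partial\tau_i\partial\tau_j$ for $i\neq j$ — or rather, one has to check they don't contribute to the symmetrized It\^o sum, which follows since $d\langle\tau_i,\tau_j\rangle = \gamma^2\delta_{ij}\,dt$ as the $\tau_i$ are driven by independent Brownian components. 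This is the main obstacle: carefully justifying the second-order It\^o term and confirming the Casimir identification (including that $\rho_X(\Omega)$ is the right notation for $\sum_i \rho_X(A_i^2)$, interpreted as the second-order differential operator / vector-valued function appearing in the SDE).

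\textbf{Step 2 (the reverse SDE, claim 2).} With the forward law understood as the pushforward under $\Phi:\btau\mapsto\bx$ of the law of $\btau(t)$, I would derive the time-reversal using the standard Anderson reverse-SDE formula, but applied in two equivalent ways: either (a) directly in $\bx$-space, writing \eqref{eq:SDEforward} as $d\bx = \bmu(\bx,t)\,dt + \bsigma(\bx,t)\,d\bW$ with $\bsigma = \gamma\bPi$, so that the reverse drift is $\bmu - \nabla\cdot(\bsigma\bsigma^\top) - \bsigma\bsigma^\top\nabla\log p_t$; expanding $\bsigma\bsigma^\top = \gamma^2\bPi\bPi^\top$ and $\bsigma\bsigma^\top\nabla\log p_t = \gamma^2\bPi(\bPi^\top\nabla\log p_t) = \gamma^2\bPi\bcL\log p_t$ reproduces the $\bPi\bcL\log p_t$ term and the divergence term $\gamma^2\bPi\nabla^\top\!\cdot\bPi$ of \eqref{eq:SDEbackward}; or (b) by reversing the simple $\btau$-SDE first (whose reverse is immediate) and pushing forward through $\Phi$, which should give the same answer and serves as a consistency check. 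I expect (a) to be the cleaner route to write down, with (b) relegated to a remark. One has to be a little careful that the divergence $\nabla\cdot(\gamma^2\bPi\bPi^\top)$ distributes as $\gamma^2\bPi(\nabla^\top\!\cdot\bPi) + \gamma^2(\nabla\cdot\bPi^\top)\bPi^\top$-type terms and that the book-keeping matches the stated drift; the commuting-frame property and the relation $\partial_{\tau_i}\bx = \Pi_{A_i}$ are again what make the algebra collapse to the compact form shown.

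\textbf{Step 3 (decomposition as a direct sum, claim 3).} This is essentially a restatement of what Step 1 makes manifest: the drift and diffusion of both SDEs, once the $dt$ and $d\bW$ increments are expanded, are linear combinations $\sum_i c_i(\bx,t)\,\Pi_{A_i}(\bx)$ of the fundamental vector fields (with the single exception of the Casimir term, which is second order and which I would note is itself $\sum_i \rho_X(A_i^2)$, i.e.\ built from the same algebra directions). I would phrase this by reading off from \eqref{eq:SDE_solution}–\eqref{eq:forward-sde-flow-coordinates} that each infinitesimal increment of $\bx$ equals $\sum_i d\tau_i\,\Pi_{A_i}(\bx)$, i.e.\ an infinitesimal transformation along the commuting flows $\xi_{\tau_i}$, and that the $\tau_i$ evolve by decoupled scalar SDEs — so the process genuinely factors through the product $\prod_i \xi_{A_i}$ of one-parameter flows. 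The content here is interpretive rather than computational; the only thing to be careful about is stating precisely in what sense "direct sum" holds (at the level of the tangent increments / the generator splitting over the $\cL_{A_i}$, which commute by Condition 3), and invoking Condition 2 (homogeneity) to note the product of flows is surjective onto $X$ so the representation is faithful on all of $X$ rather than orbit-by-orbit.
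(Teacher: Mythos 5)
Your proposal is correct and follows essentially the same route as the paper: change to flow coordinates via the product-of-exponentials map, apply It\^o's formula to identify the Casimir as the second-order correction, and apply Anderson's reverse-time SDE formula with $\bsigma=\gamma\bPi$ followed by the product-rule expansion of $\nabla\cdot(\bPi\bPi^\top)$. The only organizational difference is that the paper performs It\^o's lemma in \emph{both} directions — first on $\btau(\bx)$ given $d\bx$, verifying the flow-coordinate SDE collapses to $d\btau=\beta\bbf\,dt+\gamma\,d\bW$ (its proof of claim 3, using the cancellation $(\nabla_\bx\btau)^\top\Delta_{\btau}\bx + \Tr[\bPi^\top (H_\bx\btau)\bPi] = H_{\btau}\btau = 0$), and then separately on the posited $\bx(\btau(t))$ to re-derive \eqref{eq:SDEforward} (its proof of claim 1) — whereas you do only the $\btau\to\bx$ direction and read off claim 3 as an interpretive corollary. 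Your version is slightly more economical; the paper's duplication serves as a consistency check. One small thing the paper is explicit about and you should flag: for the $\btau$-SDE to genuinely decouple into scalar equations, the drift must satisfy $f_i(\bx(\btau))=f_i(\tau_i)$ (the paper states this as a proviso), and the Taylor/It\^o bookkeeping requires $[A_i,A_j]=0$ as matrices (not merely commuting flows), which for non-Abelian $\g$ may require the adapted representation $\widetilde\rho$ constructed in Appendix~\ref{appendix:spherical-so(3)-dilation}.
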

We refer to Appendix \ref{app:proof_theorem} for the full proof of the above result. Here we limit
ourselves to a few comments regarding the extra terms that appear in the paired SDEs.
\begin{wrapfigure}{r}{0.2\columnwidth}
  \centering
  \includegraphics[width=0.2\columnwidth]{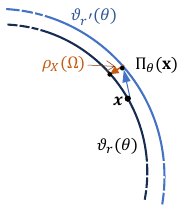}
  \caption{Quadratic Casimir for $\SO(2)$.} 
  \label{fig:casimir}
  \vspace{-0.5cm}
\end{wrapfigure}
The appearance of the Casimir element (we assume the identity as bilinear form on $\g$ \citep{kac1983invariant}) compensates for the deviation of the tangent vector from the orbit due to the curvature of the flow coordinates.
This can be seen in the example of $\SO(2)$ acting on $\R^2$  (which will be discussed thoroughly below).
 An infinitesimal transformation along the $\theta$ direction, represented by $\Pi_\theta$, moves any point $\mathbf{x}$ along a vector tangent to its $\SO(2)$ orbit, a circle of radius $r = \sqrt{x_1^2 + x_2^2}$. Due to the orbit's non-zero curvature, this movement would shift the point to an orbit of radius $r'>r$. The term $\rho_X(\Omega)$ 
compensates for this displacement, ensuring the final point remains close to the original orbit. This is illustrated in Figure \ref{fig:casimir}.
With this result at hand we can formulate our procedure for our Lie group-induced score-based generative modeling with SDEs. 

\paragraph{Perturbing data through the SDE.}

The forward-time SDE \eqref{eq:SDEforward} defines a noising
diffusion process respecting the decomposition of 
the Lie algebra $\g$ infinitesimal actions on $X$. 
In fact, given a data sample $\bx(0)\sim p_0$, the solution \eqref{eq:SDE_solution} takes the form of a product of finite group element actions $O_i$ on $\bx(0)$, 
where the specific order is irrelevant since the Lie algebra generators commute. 
For each factor, we first determine $\btau(0)=\btau(\bx(0))$, and employ these
as initial conditions for the forward SDE \eqref{eq:forward-sde-flow-coordinates}. By choosing appropriately the drift terms $f_i's$, for instance, to be affine in the flow coordinates $\tau_i$,
we can solve for $\btau(t)$ with standard techniques \citep{sarkka2019applied}, as this will follow a Gaussian distribution.
Alternatively, we can sample from  $\btau(t)$ by first simulating \eqref{eq:forward-sde-flow-coordinates}, then performing sliced score matching \citet{song2020sliced, pang2020efficient}
to sample from $p_t(\bx(\btau(t))| \bx(0))$. 

\paragraph{Generating samples through the reverse SDE.}

The time-reverse SDE \eqref{eq:SDEbackward} guides the generation of samples  $\bx(0)\sim p_0(\bx)$ starting from samples $\bx(T)\sim p_T(\bx)$, provided we can estimate the generalized score $\bcL \log p_t(\bx)$
of each marginal distribution. 
To sample from $p_T$, we use the fact that the distribution in the flow coordinates $\btau$ is tractable (with an appropriate choice of the drift terms $\bf$ and time-dependent functions $\beta,\gamma$ in \eqref{eq:SDEforward}), and that
(since $p_t(\bx)d\bx = p_t(\btau)d\btau$)
\begin{align}
\label{eq:distribution_time}
    p_t(\bx) = p_t(\btau) \left|\frac{\p \btau}{\p\bx}\right| 
    =  p_t(\btau) \left|\bPi^{-1}(\bx)\right|  ~,
\end{align}
where the extra term corresponds to the determinant of the Jacobian of the coordinate transformation induced by the fundamental flow coordinates. 
In particular, when $\bbf(\btau)$ is affine in $\btau$, it follows that $p_T(\btau) = \mathcal{N}(\boldsymbol{\tau} \mid \boldsymbol{0}, \bSigma)$,
where $\bSigma = \text{diag}(\sigma_1^2, \sigma_2^2, \dots, \sigma_n^2)$. Thus, we can sample $\btau(T)\sim p_T(\btau)$
simply as a collection of independent Gaussian random variables, and use the flow map  
to obtain $\bx(T)=\boldsymbol{\xi}_{\bA}(\btau(T),\bx_0)$, which will follow the distribution \eqref{eq:distribution_time} for $t=T$.
We describe training and sampling procedures in Algorithms \ref{alg:training} and \ref{alg:sampling} with further details for different groups in Appendix \ref{app:experiments}.
\algrenewcommand\algorithmicindent{0.5em}%
\begin{figure}[t]
\begin{minipage}[t]{0.495\textwidth}
\begin{algorithm}[H]
  \caption{Training with variance-preserving scheduler} \label{alg:training}
  \small
  \begin{algorithmic}[1]
    \Repeat
      \State $\bx_0 \sim q(\bx_0)$
      \State $t \sim \mathrm{Uniform}(\{1, \dotsc, T\})$
      \State $\boldsymbol{\eta}\sim\mathcal{N}(\bzero,\bI)$
      \State $\btau_0 = M_G(\bx_0)$ \Comment{Flow coordinates. 
      $M_G$ is group-dependent}
      \State $\btau_t = \alpha_t \btau_0 + \sigma_t\boldsymbol{\eta}$ \Comment{Sample from $p(\boldsymbol{\tau}_t|\boldsymbol{\tau}(\bx_0))$}
      \State $\bx_t = M_G^{-1}(\btau_t) $ \Comment{Cartesian coordinates} 
      \State Take gradient descent step on
      \Statex $\qquad \grad_\theta \left\| \mathbf{s}_\theta(\bx_t, t) + \frac{\boldsymbol{\eta}}{\sigma_t} \right\|^2$
    \Until{converged}
  \end{algorithmic}
\end{algorithm}
\end{minipage}
\hfill
\begin{minipage}[t]{0.495\textwidth}
\begin{algorithm}[H]
  \caption{Sampling with variance-preserving scheduler} \label{alg:sampling}
  \small
  \begin{algorithmic}[1]
    \vspace{.04in}
    \State $\btau_T \sim \mathcal{N}(\bzero, \bI)$
    \State $\bx_T = M_G^{-1}(\btau_T)$ \Comment{Cartesian coordinates}
    \For{$t=T, \dotsc, 1$}
      \State $\boldsymbol{\eta} \sim \mathcal{N}(\bzero, \bI)$ if $t > 1$, else $\boldsymbol{\eta} = \bzero$
      \State $ \mathbf{v}_{s,t} = (\sum_i \underbrace{(\frac{1}{2}\btau_{t,i} + \mathbf{s}_\theta(\bx_t, t)_i)}_{\in \R} {A}_i )\bx_t$ 
      \Comment{Dynamics induced by drift and generalized scores}
      \State $\rho_X(\Omega)= \mathbf{v}_{c, t} = (\sum_{i} {A}_{i}^2)\bx_t $ \Comment{Dynamics induced by quadratic Casimir elements}
      \State $\mathbf{v}_{d,t} = \sum_i \underbrace{\nabla \cdot [{A}_i \bx_t]}_{\in \R}({A}_i\bx_t) $ \Comment{Dynamics induced by divergences}
      \State $\mathbf{v}_t = \mathbf{v}_{s,t} + \frac{1}{2}  \mathbf{v}_{c,t} +  \mathbf{v}_{d,t} $
      \State $\tilde{\bx}_{t-1} = \bx_t + \beta_t \mathbf{v}_t$  \Comment{Update state based on velocity}
      \State $\bx_{t-1} = \tilde{\bx}_{t-1} + \sqrt{\beta_t}\sum_i{\eta_i {A}_i\bx_t}$ \Comment{Stochastic dynamics}
      \State $\btau_{t-1} = M_G(\bx_{t-1})$
    \EndFor
    \State \textbf{return} $\bx_0$
    \vspace{.04in}
  \end{algorithmic}
\end{algorithm}
\end{minipage}
\vspace{-1em}
\end{figure}

\paragraph{Estimating the generalized score.}

Analogously to standard score matching, we train a time-dependent neural network $\bs_{\btheta}(\bx(t), t):\R^n \times \R \rightarrow \R^n$ to estimate the generalized score $\bcL \log p_t(\bx(t)| \bx(0))$ at any time point, that is,
we minimize the objective
\begin{align}
\label{eq:score_objective}
    &\E_t \left\{w(t) \E_{\bx(0)\sim p_0(\bx)} \E_{\bx(t)\sim p_t(\bx|\bx(0))} \left[ \big| \bs_{\btheta}(\bx(t), t) - \bcL \log p_t(\bx(t)| \bx(0)) \big|^2 \right] \right\}~,
\end{align}
where $w:[0, T]\rightarrow \R_+$ is a time-weighting function. 
Now, from Condition 3 above and the property that $\cL_{A_i}$ computes the direction derivative along the flow of $\Pi_{A_i}(\bx)$, it follows that $\bcL \log p_t(\bx(t) | \bx(0)) = \nabla_{\btau(t)}\log p_t(\bx(\btau)(t) | \bx(\btau)(0))$.
Under the above assumptions, $p_t(\btau) = \mathcal{N}(\boldsymbol{\tau} | \bmu(\bx(0), t), \bSigma(t))$, where the  form of the mean and the variance 
depends on the explicit form of \eqref{eq:forward-sde-flow-coordinates}. Using the parametrization $\btau(t) = \bmu(\bx(0), t) + \sqrt{\bSigma(t)}\boldsymbol{\eta}_t$, where $\boldsymbol{\eta}_t\sim \cN(\boldsymbol{0},\boldsymbol{I})$, we obtain
\begin{align}
    \bcL \log p_t(\bx(t) | \bx(0)) &= -\bSigma^{-1}(\btau(t)-\bmu(\bx(0), t) = -\sqrt{\bSigma(t)}^{-1} \boldsymbol{\eta}_t~.
\end{align}

\subsection{Examples}\label{subsection:examples}

In this section we look at some relevant examples for different choices for $G$ and $X$.

\paragraph{Standard Score Matching.}\label{sec:standard-score-matching}

Standard score matching can be recovered as a special case of our formalism by choosing $X=\R^n$ and $G=T(n)$. As we show explicitly in Appendix \ref{app:TN}, we have $\bcL=\nabla$
and the Lie algebra action $\bPi(\bx) = \boldsymbol{I}$, the identity on $X$. Since $\bPi$ is $\bx$-independent, its divergence vanishes, as well as the quadratic Casimir ($T(N)$ is Abelian), so that the SDEs \eqref{eq:SDEforward} take the known form
\begin{align}
    d\bx & = \beta(t) \bbf(\bx) dt+\gamma(t) d\bW~, 
    &d\bx &=\left[ \beta(t)\bbf(\bx) - \gamma(t)^2\nabla \log p_t(\bx)\right] 
           dt +\gamma(t)  d\bW~. 
\end{align}

\paragraph{\texorpdfstring{$\boldsymbol{G=\SO(2)\times \R_+}$.}{G=SO(2)X R+}}

A simple but non-trivial case in given by $G=\SO(2)\times \R_+$ describing rotations and dilations acting on $X=\R^2$. 
A basis for 
$\g = \so(2)\oplus\R$ is given by $A_r = \boldsymbol{I}$ and $A_\theta=\begin{pmatrix}
    0 & -1\\1 & 0
\end{pmatrix}$, yielding $\bPi(\bx)=\begin{pmatrix}
    x & -y\\ y & x
\end{pmatrix}$, which satisfies all the conditions of section \ref{ss:g_conditions}. Following our discussion above and in Appendix \ref{app:TN} we have 
(since $\rho(\Omega) = A_r^2 + A_\theta^2 = \boldsymbol{I} - \boldsymbol{I} = \boldsymbol{0}$)
\begin{align}\label{eq:sde-2d-ob}
     d\bx & = \beta(t)\left(f_r(r)A_r \bx + f_\theta(\theta) A_\theta \bx \right)dt 
 +\gamma(t)\left(dW_rA_r \bx + dW_\theta A_\theta \bx \right)  ~,
\end{align}
and we see that the SDE splits into contributions from the two Lie algebra summands.
To find an explicit solution, let $\gamma(t)=\sqrt{\beta(t)}$ and $f_r = -\frac14 \log(x^2+y^2)$, $f_\theta = -\frac{1}{2}\arctan\frac{y}{x}$. This choice corresponds, in the flow coordinates system, to a 2d Ornstein-Uhlenbeck system \citep{gardiner1985handbook}
which has a Gaussian solution with mean $\begin{pmatrix}
        r(0) \\
        \theta(0)
    \end{pmatrix} e^{-\int_0^t \beta(s)ds}$ and variance $\left( 1- e^{-\int_0^t \beta(s)ds}\right) I$.
Let us define $\sigma(t) = \sqrt{1- e^{-\int_0^t \beta(s)ds}}$, such that 
$r(t)= r(0) + \lambda(t) = r(0) - r(0) \sigma(t)^2 + \sigma(t) \eta_r$ and 
similarly $\theta(t)= \theta(0)+ \varphi(t) = \theta(0) - \theta(0) \sigma(t)^2 + \sigma(t) \eta_\theta,$
where $\eta_r, \eta_\theta\in \cN(0,1)$, then it is an easy calculation to show that
{\small
\begin{align}
\label{eq:back-reverse-cartesian}
    \begin{pmatrix}
        x_1(t) \\ x_2(t)
    \end{pmatrix} 
    &=
    e^{\lambda(t)}
    \begin{pmatrix}
        \cos\varphi(t) & -\sin\varphi(t) \\ \sin\varphi(t) & \cos\varphi(t)
    \end{pmatrix}
    \begin{pmatrix}
        x_1(0) \\ x_2(0)
    \end{pmatrix}~.
\end{align}
}
We can look at the asymptotic behavior of the solution. 
Assuming that $\beta(t)$ is a monotonous increasing function, that is, $\beta(t + \epsilon) > \beta(t)$ for $\epsilon > 0$, then $\lim_{t\rightarrow\infty} \sigma(t) = 1$ 
and hence
{\small
\begin{align}
\label{eq:2dorsteinuhelnbeck}
    \lim_{t\rightarrow \infty} \bx(t)
    &= e^{-r_0+\eta_r}
    \begin{pmatrix}
        \cos\theta_0 & \sin\theta_0 \\ -\sin\theta_0 & \cos\theta_0
    \end{pmatrix}
     \begin{pmatrix}
        \cos\eta_\theta & \sin\eta_\theta \\ -\sin\eta_\theta & \cos\eta_\theta
    \end{pmatrix} 
    \begin{pmatrix}
        e^{r_0}\cos\theta_0 \\ e^{r_0}\sin\theta_0
    \end{pmatrix}
    =\begin{pmatrix}
        e^{\eta_r} \cos\eta_\theta \\ e^{\eta_r}\sin\eta_\theta
    \end{pmatrix}~,\nonumber
\end{align}
}
where $\theta_0=\theta(0), \ r_0=r(0)$. Note that, even if \eqref{eq:2dorsteinuhelnbeck} is not Gaussian, we can still
easily draw samples from it by sampling the two Gaussian variables $\eta_{r,\theta}$.

\paragraph{Dihedral and bond angles.}
\label{ss:torsion}
\begin{wrapfigure}{r}{0.5\columnwidth}
    \centering
    \includegraphics[width=0.5\columnwidth]{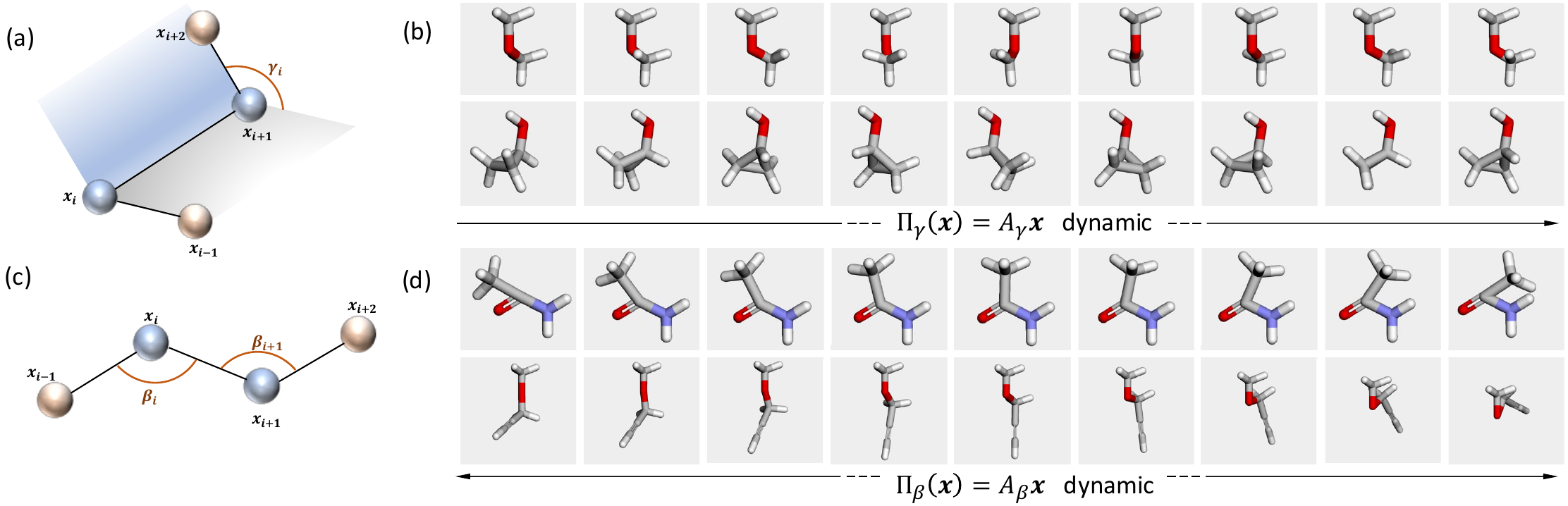}
    \caption{Lie algebra $\so(2)\subset\so(3)$ dynamics for torsion (a,b) and bond angles (c,d) in molecular conformers.}
    \label{f:torsion}
\end{wrapfigure}

The above formalism can be applied to obtain transformations of physically meaningful quantities, as bond and torsion angles for molecules' conformations. Let $\gamma_i$ be the dihedral angle
between the planes identified by the points $\{\bx_{i-1}, \bx_i, \bx_{i+1}\}$ and $\{\bx_{i}, \bx_{i+1}, \bx_{i+2}\}$, respectively (Figure \ref{f:torsion}a). The Lie algebra element corresponding to an infinitesimal change 
in $\gamma_i$ is given by a $3N\times3N$-dimensional $3\times3$-block diagonal matrix, whose 
$j=1,\dots,N$ block is given by $H(j-(i+1))\widehat{\bx}_{i+1,i}\cdot \bA)$, where $\bA = (A_x, A_y, A_z)$ is the vector of the Lie algebra basis for $\so(3)$, $\hbx_{i+1,i}=(\bx_{i+1}-\bx_i)/|(\bx_{i+1}-\bx_i)|$ and $H(i)=1$ if $i>0$ and 0 otherwise is the Heaviside step function. For bond angles $\beta_i$ (Figure \ref{f:torsion}c) we
construct the corresponding $\so(2)\in \so(3)$ algebra element blocks as 
$H(j-i) (\bx_{i+1,i}\times \bx_{i-1,i})\cdot \bA$. 
Examples of the dynamics generated by these operators are presented in Figure \ref{f:torsion}(b,d).

\section{Related Work}

Representation theory applied to neural networks has been studied both theoretically \citep{esteves2020theoretical, chughtai2023toy, puny2021frame, smidt2021euclidean} and applied to a variety of groups, architectures and data type: CNNs \citep{cohen2016group, romero2020attentive, liao2023lie, finzi2020generalizing,  weiler2019general, weiler20183d}, Graph Neural Networks \citep{satorras2021n}, Transformers,  \citep{geiger2022e3nn, romero2020group, hutchinson2021lietransformer}, point clouds \citep{thomas2018tensor}, 
chemistry \citep{schutt2021equivariant, le2022equivariant}.
On the topic of disentanglement of group action and symmetry learning, \citet{pfau2020disentangling} factorize a Lie group from the orbits in data space,
while \citet{winter2022unsupervised} learn through an autoencoder architecture invariant and equivariant representations of any group acting on the data. 
\citet{fumero2021learning} learns disentangled representations solely from data pairs. 
\citet{dehmamy2021automatic} propose an architecture based on Lie algebras that can automatically discover symmetries from data.
\citet{xu2022geodiff} predict molecular conformations from molecular graphs
in an roto-translation invariant fashion with equivariant Markov kernels.

Related to our study is the field of diffusion on Riemannian manifolds. \citet{de2022riemannian} propose diffusion in a product space, a condition which is not a necessary in our framework, defined by the flow coordinates in the respective Riemannian sub-manifolds. When the Riemannian manifold is a Lie group, 
their method yields dynamics similar to ours, as illustrated in an example in Section \ref{ss:torsion}. 
In fact, our formalism could be integrated with their approach to create a unified framework for diffusion processes on the broader class of Riemannian manifolds admitting a Lie group action. 
These techniques has been applied in a variety of use cases \citep{corso2022diffdock, ketata2023diffdock, pmlr-v202-yim23a, jing2022torsional} for protein docking, ligand and protein generation.
The works \citet{zhu2024trivialized, kong2024convergence}
leverage trivialized momentum to perform diffusion on the Lie algebra (isomorphic to \(\mathbb{R}^n\)) instead of the Lie group, thereby eliminating curvature terms, although their approach is to date only feasible for Abelian groups. 
An interesting connection with our work is the work of \citet{kim2022maximum}: the authors propose a bijection to map a non-linear problem to a linear one, to approximate a bridge between two non-trivial distributions. Our case can be seen as a bijection between the (curved) Lie group manifold and the (flat) Euclidean data space.

In the context of interpreting the latent space \citep{bertolini2023explaining} of diffusion models, \citet{park2023understanding} explores the local structure of the latent space (trajectory) of diffusion models using Riemannian geometry. 
Similarly, \citet{haas2024discovering} propose a method to uncover semantically meaningful directions in the semantic latent space ($h$-space) \citep{wang2023infodiffusion} of denoising diffusion models (DDMs) by PCA. \citet{wang2023infodiffusion} propose a method to  learn disentangled and interpretable latent representations of diffusion models in an unsupervised way. 
We note that the aforementioned works aim to extract meaningful latent factors in traditional DDMs, often restricting to human-interpretable semantic features and focusing on image generation. 

\section{Experiments}
\label{s:experiments}

\subsection{2d, 3d and 4d distributions}
\label{ss:toys}

\begin{figure*}[t!]
    \centering
    \includegraphics[width=1.0\textwidth]{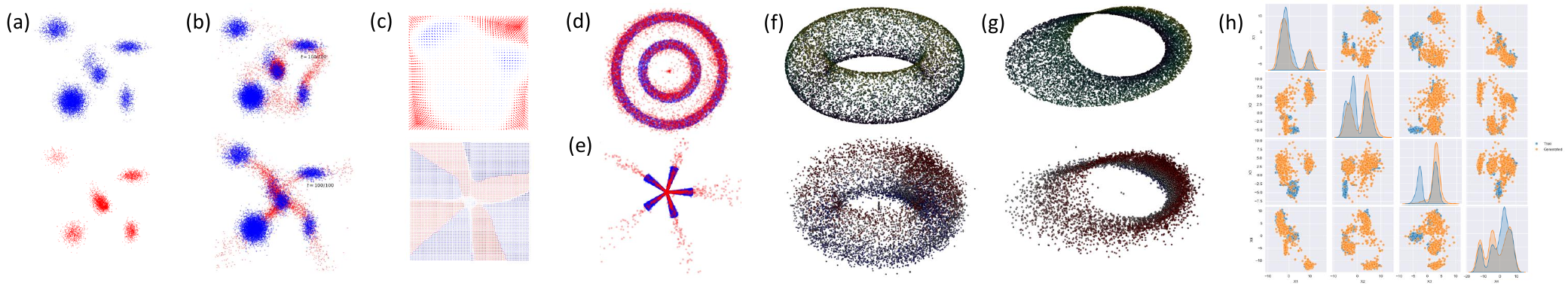}
    \caption{(a) 2d mixture of Gaussians (top: ground truth, bottom: generated); (b) generating process using single scores for the subgroups $\SO(2), \R_+$ with the corresponding 
    scores (c); (d,e) one-dimensional learning for a symmetric distributions; $3d$-distributions: torus (f) and Möbius strip (g) (top: ground truth, bottom: generated); (h) 4d mixture of Gaussian for the group $G=\SO(4)\times \R_+$.}
    \label{fig:2d_distributions}
\end{figure*}

In Figure \ref{fig:2d_distributions} we illustrate the framework for a variety of $d=2,3$-dimensional distributions. In all cases we take $G=\SO(d)\times \R_+$. Figure \ref{fig:2d_distributions}(a,b,c) displays a mixture of Gaussians:
in (a) (bottom) we see that our generalized score-matching can learn any distribution, regardless of its inherent symmetry;  (b) shows the output of the generation process using only one score (top $\g=\so(2)$, bottom $\g=\mathfrak{r}_+$), while (c) shows the vector fields corresponding to the scores, where we color-coded the field directions.
Figures \ref{fig:2d_distributions}(d,e) depicts radial and angular distributions, where the score is learned using the respective Lie algebra elements. This reflects the ability to leverage the symmetry properties of the data and perform diffusion in a lower-dimensional space. We also show in Figure \ref{fig:2d_distributions}h ($G=\SO)(4)\times\R_+$) that our method can be applied to higher dimensional Lie groups. We list quantitative comparisons in terms of W2-distances for our generalized score model against standard diffusion model in Appendix \ref{sec:appendix-toy}.
\vspace{-0.2cm}

\subsection{Rotated MNIST}
\label{ss:MNIST}

In this experiment we show that our framework can be applied to effectively learn a bridge between two non-trivial distributions, adopting however only techniques from score-matching and DDPM. 
Let $p_T(\bx)$ be the rotated MNIST dataset and $p_0(\bx)$ the original (non-rotated) MNIST dataset. 
We can learn to sample from $p_0$ starting from element of $p_T$ by simply modeling a $\SO(2)$ dynamic. 
Some examples of our results are shown in Figure \ref{fig:MNIST}. Notice that our formalism allows us to reduce the learning to a 1-dimensional score $\cL_\theta = x_1\p_{x_1} - x_2\p_{x_2}$, which reflects the true dimensionality of the problem. We trained the model with $T=100$ time-steps, but for sampling it suffices to set $T=10$. As it can be seen in the example trajectories \ref{fig:MNIST}b, the model starts converging already at $t/T\sim 0.5$. 
We employ a CNN which processes input images $\bx(t)$, and the resulting feature map is flattened and concatenated with a scalar input $t$, then passed through fully connected layers to produce the final output. 

We compare our approach to the Brownian Bridge Diffusion Model (BBDM) \citep{Li_2023_CVPR}. Unlike our method, BBDM operates unconstrained in the full MNIST pixel space (\(\mathbb{R}^{28 \times 28}\)), where intermediate states represent latent digits. As shown in Figure \ref{fig:MNIST}a, this can result in incorrect transitions, such as adding extraneous pixels or altering the original digit, even generating entirely different digits (Figure \ref{fig:MNIST}b).

\begin{wrapfigure}{r}{0.4\columnwidth}
\vspace{-0.4cm}
        \centering
        \captionof{table}{FID and Accuracy scores comparing GSM against BBDM.}
        \label{tab:gsm_bridge_summary}
        { \footnotesize
        \begin{tabular}{p{0.8cm}cc}
            \toprule
            Model & \begin{tabular}{c} Avg \\ Acc ($\uparrow$) \end{tabular} & \begin{tabular}{c} Avg \\ FID ($\downarrow$) \end{tabular} \\
            \midrule
            GSM & $\boldsymbol{0.96\pm 0.02}$ & $\boldsymbol{85.8\pm 15.7}$ \\
            BBDM & $0.80\pm 0.10$ & $133.4\pm 19.0$ \\
            \bottomrule
        \end{tabular}
}
\end{wrapfigure}
We further evaluate both methods on the classification accuracy as well as FID scores of generated MNIST digits. Since the task is to correctly rotate a MNIST digit into the correct orientation aligning with the ground-truth data distribution, we observe that our GSM model achieves superior classification accuracies (0.96 vs 0.80) and FID scores (85.77 vs 133.4) as shown in Table \ref{tab:gsm_bridge_summary}.
Further details can be found in Appendix \ref{sec:appendix-mnist}.

\begin{figure}[t]
    \centering
        \centering
        \includegraphics[width=\textwidth]{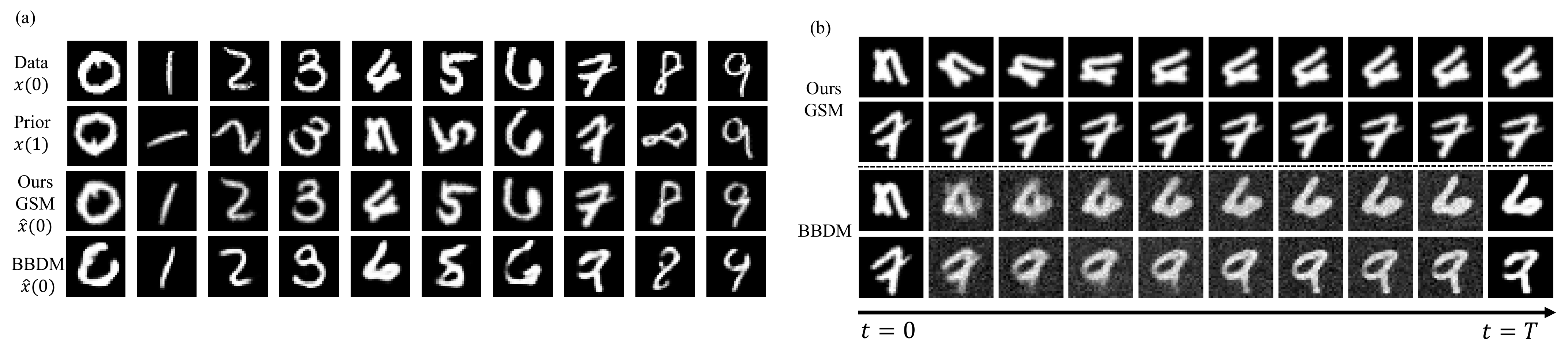}
        \caption{(a) Original and rotated MNIST samples with generated samples from our model and BBDM. (b) Reverse diffusion trajectories of our model against BBDM. Intermediate samples from BBDM resemble interpolation of mixed digits. For the first BBDM case, the 4-digit transitions into a 6-digit.}
        \label{fig:MNIST}
\end{figure}

\subsection{QM9.}
\label{ss:QM9}
We use our framework to train a generative model $p_{\theta}(X|M)$ for conformer sampling of small molecules $M$ from the QM9 dataset \citep{Ramakrishnan2014}. 
We only keep the lowest energy conformer as provided in the original dataset, that is, for each molecule only one 3D conformer is maintained. 
Here $X=\R^{3N}$ and we choose $G=(\SO(3)\times\R_+)^N$, where each factor acts on the space $\R^3$ spanned by the Cartesian coordinates of the molecule's atoms, respectively. 
\begin{wrapfigure}{r}{0.7\columnwidth}
    \centering
    \includegraphics[width=0.7\columnwidth]{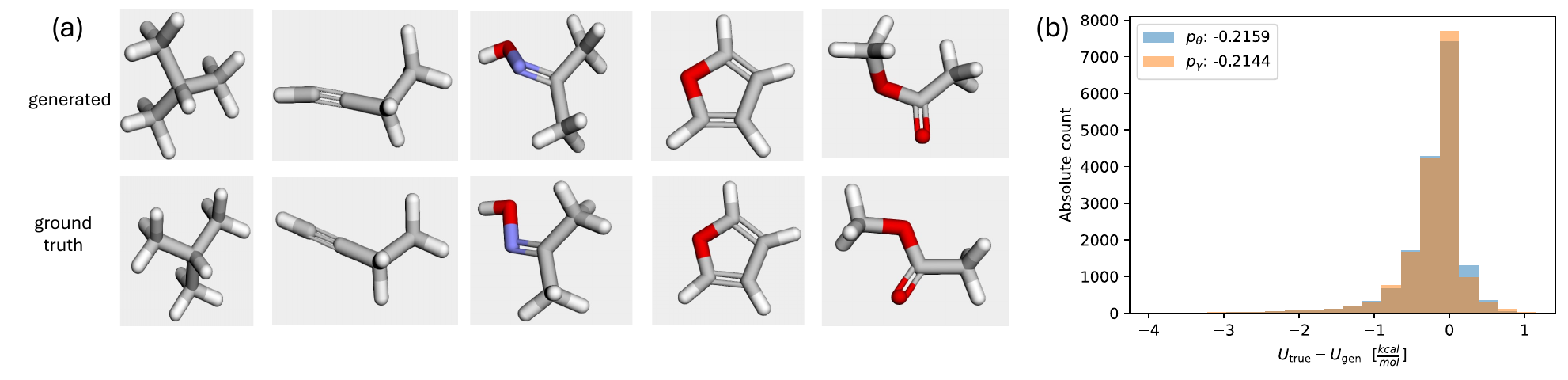}
    \caption{(a) Generated 3D conformer from the QM9 validation set (top) and ground truth conformer (bottom). (b) Energy difference distribution between diffusion models $(p_\theta, p_\gamma)$ and ground-truth energy. Both models generate a similar $\Delta$ energy distribution.}
    \label{fig:QM9}
    \vspace{-0.3cm}
\end{wrapfigure}
As Figure \ref{fig:QM9}a shows, our generative process yields conformers that are energetically very similar to the ground truth conformers, while showing some variability, as it can be seen in the last example where the torsion angle is differently optimized. We train another model $p_\gamma(X|M)$ via standard Fisher denoising score-matching, i.e., choosing $G=T(3)^N$ as in Sec. \ref{sec:standard-score-matching}, and generate 5 conformers per molecule for both models $p_\theta, p_\gamma$. We then compute the UFF energy \citep{uff_rappe_92} implemented in the RDKit for all generated conformers and extract the lowest energy geometry as generated sample. To compare against the reference geometry, we compute the energy difference $\Delta = U_\text{true}-U_\text{gen}$ for both models. Figure \ref{fig:QM9}b shows that both diffusion models tend to generate conformers that have lower energies than the ground true conformer according to the UFF parametrization, while the diffusion model that implements the dynamics according to $G=(\SO(3)\times\R_+)^N$ (colored in blue) achieves slightly lower energy conformers, mean $\Delta_\theta=-0.2159$ against 
mean $\Delta_\gamma=-0.2144$ for the standard diffusion model (colored in orange).

\subsection{CrossDocked2020: Global E(3) and Protein-Ligand Complexes.}
\label{ss:crossdocked}

In this final experiment, we train a generative model for global SE(3) transformations acting on small molecules. Specifically, given a pair consisting of a compound and a protein pocket, our goal is to generate the trajectory by which the ligand best fits into the pocket. Importantly, the internal structure of the compound remains fixed, which presents a challenge with standard diffusion processes. Thus, while the SE(3) transformations are global with respect to the ligand, they do not represent global symmetries of the overall system. We derive in appendix \ref{app:global_E3} the relevant operators that guide the dynamics \eqref{eq:SDEbackward}. 
Figure \ref{fig:cross_dock}a shows examples of docked molecules using $\SE(3)$-guided score-matching diffusion. The true and generated molecules at different generation steps are visualized as point clouds, showing a good agreement. Figure \ref{fig:cross_dock}b shows that our model achieves a lower RMSD ($2.9\pm 1.0$ \AA~vs $5.6\pm 1.2$ \AA) for the docked ligands than the RSGM method \citep{de2022riemannian, corso2022diffdock} (for details, we refer to Appendix \ref{app:rsgm}). We also compare our method against the Brownian Bridge Diffusion Model (BBDM) which operates on the $T(3)^N$ group, as a standard (Euclidean) diffusion baseline with the constraint to start and end at valid rigidly transformed molecules during training. We use the same network architecture as in the GSM and RSGM experiments to learn the correct \SO(3) rotation. Unlike existing experiments (our method and RSGM), the Euclidean BBDM in this setting attempts to learn only global 
 \SO(3) rotation, neglecting translation. Since the problem is implicitly 3-dimensional but the equivariant score network predicts all $3N$ ligand atom coordinates, final samples with implausible coordinate trajectories tend to have higher energies due to unphysical poses including bond stretching, non-planar aromatic rings, and deformed rings. 
In terms of mean/std RMSD on the CrossDocked2020 test set, our method (Lie algebra: $2.91 \pm 1.0 $ \AA ) is comparable with BBDM ($2.92 \pm 1.57$ \AA). However, since BBDM models all 
atomic coordinates, the overall dynamics do not follow a global \SO(3) rotation, achieving 
MAE$(D(x_0, x_0), D(\hat{x}_0, \hat{x}_0))= 0.43\pm 0.21$), while RSGM and our method achieve $0.0$ by design. This indicates that Lie algebra induced diffusion offers a clear advantage over standard diffusion models in this particular bridging problem.
\begin{figure}[t!]
    \centering
    \includegraphics[width=\linewidth]{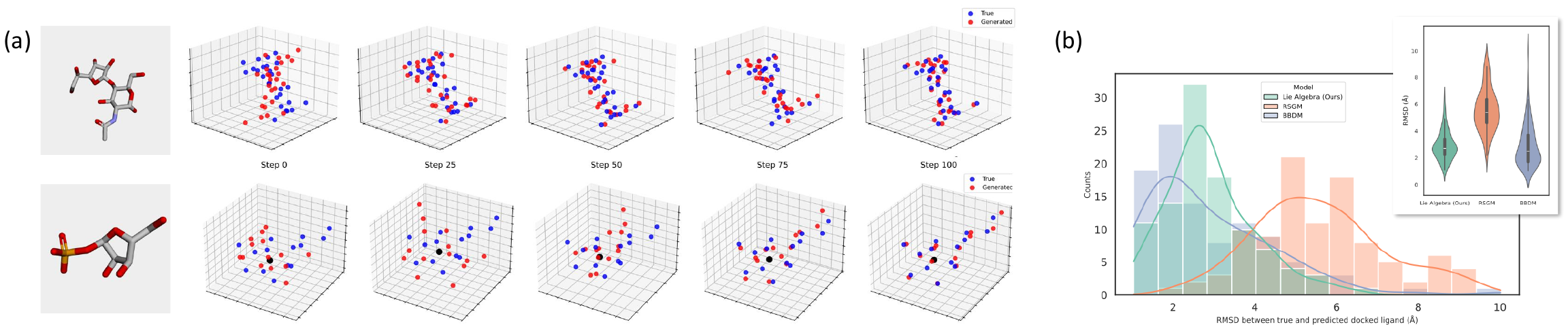}
    \caption{(a) $\SE(3)$ trajectories for molecular docking; (b) Comparison with RSGM.
    }
    \captionsetup{aboveskip=0.0pt, belowskip=2pt}
    \label{fig:cross_dock}
\end{figure}

\section{Conclusions and Outlook}

We presented a method for generative modeling on \textit{any} Lie group $G$ representation on a space $X$
through generalized score matching.
\textbf{Our framework generates a curved Lie group diffusion dynamics in flat Euclidean space}, 
without the need to transform the data and of performing group projections.
Specifically, we introduced a new class of exactly-solvable SDEs that guide the corruption and generation processes. 
Thus, our framework does not merely complement existing methods, but \textit{expands} the space of exactly solvable diffusion processes. 
Our framework is particularly relevant given recent findings \citep{abramson2024accurate} showing that unconstrained models outperform equivariant ones: with our framework there is no need of a tradeoff, as we retain the expressivity of unconstrained models on raw Cartesian coordinates with the benefits of group inductive bias. 
Moreover, our techniques descend quite straightforwardly to flow matching \citep{lipman2022flow} through the Diffusion Mixture Representation Theorem \citep{peluchetti2023non, brigo2008general}.
We spell out the connection in appendix \ref{app:flow_matching} and we plan to expand on this in future work.

In the context of generative chemistry, particularly for modeling interactions within protein pockets, our methods could be employed to decouple the intrinsic generation of ligands from the global transformations required to fit the ligand into the pocket. This approach can also be extended beyond 3D coordinates, for instance, by working with higher-order representations, such as modeling electron density \citep{Rackers_2023}.

Moreover, for more complex problems, it is feasible that an optimal generation process can be achieved by combining different choices of \(G\) along the trajectory. In the context of ligand generation, we propose a time-dependent group action $G_t = t T(3N) + (1-t)(\text{SO}(3) \times \mathbb{R}_+)^N$: at the beginning of the diffusion process, when the point cloud is still far from forming a recognizable conformer, we can leverage the properties of a true Gaussian prior. As the point cloud is gradually optimized to ``resemble a molecule'', we progressively transition to a generalized score-guided process. This shift allows us to fine-tune chemically relevant properties, such as bonds and torsion angles, ensuring that the intermediate and final conformers are chemically valid and accurate. This will be the focus of our forthcoming work. 

A potential limitation of our work is that it currently does not extend to representations of finite groups. While finite groups also admit a rich representation theory, it remains an open question how to adapt our framework to those settings. Another limitation lies in our assumption that $X$ is a vector space, whereas Lie groups can act on more general manifolds. Although the conditions discussed in Section~\ref{ss:g_conditions} hold for arbitrary manifolds, our main theoretical results are restricted to actions on vector spaces. Extending the full analysis to curved spaces would be a compelling direction for future work, potentially enabling a general theory of diffusion on Riemannian manifolds via Lie group representations.

\section*{Code Availability}
Our source code will be made available on \href{https://github.com/pfizer-opensource/symmetry-induced-score-matching}{https://github.com/pfizer-opensource/symmetry-induced-score-matching}.

\bibliography{references.bib}
\bibliographystyle{iclr2025_conference}

\newpage

\onecolumn
\appendix

\section{Appendix A: Summary of Notation and Intuition}

\begin{table}[h]
\centering
\renewcommand{\arraystretch}{1.3}
\begin{tabular}{p{1.1cm} p{4.0cm} p{4.0cm} p{5.5cm}}
\toprule
\textbf{Symbol} & \textbf{Name} & \textbf{Definition} & \textbf{Intuition} \\
\midrule
$G$ & Lie group & & A continuous symmetry group, e.g., rotations ($\mathrm{SO}(3)$), translations, scalings. Encodes the structure of transformations acting on the data. \\
$e$ & Identity element of $G$ & $e \cdot g  = g \forall g\in G$ & The identity transformation leaving everything unchanged. \\
$\mathfrak{g}$ & Lie algebra of $G$ & $T_eG$ & Tangent space at the identity; represents infinitesimal group transformations. \\
$X$ & Data manifold & & The space where the data lives, often $\mathbb{R}^n$, but can be more general or even discrete (e.g., graph for molecules, grid for images, etc.). \\
$\rho_X$ & Group action & $\rho_X: G \times X \to X$ & Specifies how each abstract group element $g \in G$ transforms data points in $X$ via matrices. \\
$G \cdot \bx$ & Orbit of $\bx$ under $G$ & $ \{ \rho_X(g)(\bx),~ g\in G \}$ & The set of all points reachable from $x$ via group actions. Captures the “symmetry class” of $x$. \\
$G_\bx$ & Stabilizer subgroup at $\bx$ & $ \{ g\in G | \rho_X(g)(\bx) = \bx \}$& Subgroup of $G$ that leaves $\bx$ unchanged. Describes residual symmetries at that point. \\
$d\rho_X$ & Infinitesimal action & $d\rho_X : \mathfrak{g} \to \text{Vect}(X)$ & Maps infinitesimal transformations to vector fields on $X$; captures how a tiny "step" in $G$ moves a point in $X$. \\
$\exp$ & Exponential map & $\exp(A) = \gamma_A(1)$, where $\gamma_A \colon \mathbb R \to G$ & 
Geodesic path on $G$ determined by the direction given by the vector $A\in \mathfrak{g}$.\\
$\xi_A$ & Flow on $X$ induced by $\rho_X$ & $\rho_X\left(\exp(\tau A)\right)(\bx)$ & Path on $X$ corresponding to a geodesic path on $G$ determined by $A$.\\
$\Pi_A(\bx)$ & Fundamental vector field from $A \in \mathfrak{g}$ & $\frac{d}{d\tau}\big|_{\tau=0} \rho_X (\exp(\tau A)) (\bx)$&  A vector field on $X$ generated by a direction $A$ in the Lie algebra; describes how $x$ moves under an infinitesimal group transformation. \\
\bottomrule
\end{tabular}
\caption{Summary of Lie group/Lie algebra related quantities with their notation, definition and intuitive meaning.}
\end{table}

\section{Examples of Lie groups and Lie algebra actions}

In this appendix we list some important Lie groups and Lie algebra actions, their corresponding fundamental vector fields as well as the fundamental flow coordinates. These will be useful in the main text.

\subsection{\texorpdfstring{$T(N)$}{T(N)}}
\label{app:TN}
Let $ X = \R^N $ and $ G = T(N) $, the group of translations in $N$-dimensional space.
Element of $T(N) $ are represented by a vector 
$\mathbf{v} = (v_1, v_2, \dots, v_N)^\top \in \mathbb{R}^N $, where $ v_i $ are the translation components along the $ x_i $ axes for $ i = 1, \dots, N $, thus $T(N)\simeq \R^N$.
Explicitly, for a $\bx\in X$ its action is given by
$\rho_{\mathbb{R}^N}(\mathbf{v}, \mathbf{x}) = \mathbf{x} + \mathbf{v}$.

The corresponding Lie algebra $ \mathfrak{t}(N)$ is also isomorphic to $\R^N$, and it consists of vectors $ \mathbf{a} = (a_1, a_2, \dots, a_N)^\top \in \mathbb{R}^N $. The Lie bracket of any two elements in $ \mathfrak{t}(N) $ vanishes, as $T(N)$ is Abelian.

To derive the infinitesimal action, we first note that the exponential map is trivial, $\exp(\tau \mathbf{A}) = \tau \mathbf{A}$.
Hence, we have 
\begin{align}
\Pi_A(\mathbf{x}) &= \frac{d}{d\tau}\bigg|_{\tau=0} \rho_{\mathbb{R}^N}(\tau \mathbf{A}, \mathbf{x}) = \frac{d}{d\tau}\bigg|_{\tau=0} \left(\mathbf{x} + \tau \mathbf{A}\right) = \bA~.
\end{align}
Thus, the fundamental vector field $ \Pi_A $ corresponding to $ \mathbf{A} \in \mathfrak{t}(N) $ is the constant vector field:
$$
\Pi_A = a_1 \frac{\partial}{\partial x_1} + a_2 \frac{\partial}{\partial x_2} + \dots + a_N \frac{\partial}{\partial x_N} = \bA \cdot \nabla~.
$$

\subsection{\texorpdfstring{$X = \mathbb{R}^N$, $G = \mathbb{R}_+^*$}{X = RN, G = R+*} (group of dilations)}

Let us consider $ X = \mathbb{R}^N $ and $ G = \mathbb{R}_+^* $, the group of dilations in $ N $-dimensional space.
The group $ \mathbb{R}_+^* $ consists of all positive scaling factors. Each element of $ G = \mathbb{R}_+^* $ can be represented by a scalar $ \lambda > 0 $ that scales all vectors in $ \mathbb{R}^N $ by this factor.

The action of $ G = \mathbb{R}_+^* $ on $ \mathbb{R}^N $ is a dilation, meaning that every vector $ \mathbf{x} = (x_1, x_2, \dots, x_N)^\top \in \mathbb{R}^N $ is scaled by the factor $ \lambda $. Explicitly, the group action is given by
\begin{align}
\rho_{\mathbb{R}^N}(\lambda, \mathbf{x}) = \lambda \mathbf{x}~.
\end{align}

The Lie algebra $ \mathfrak{g} = \mathbb{R} $ corresponding to the dilation group $ G = \mathbb{R}_+^* $ consists of real numbers representing the logarithm of the scaling factor. Specifically, an element $ A \in \mathfrak{g} $ corresponds to a generator of the dilation, and the exponential map $ \exp: \mathfrak{g} \rightarrow G $ is given by:$
\exp(\tau A) = e^{\tau A}$,
where $ \tau $ is a real parameter.

The infinitesimal action corresponds to taking the derivative at $ \tau = 0 $. For a vector $ \mathbf{x} \in \mathbb{R}^N $ and $ A \in \mathfrak{g} $, the fundamental vector field $ \Pi_A $ is computed as:

\begin{align}
d\rho_{\R^N}(A) &= \frac{d}{d\tau}\bigg|_{\tau=0} \rho_{\mathbb{R}^N}(e^{\tau A}, \mathbf{x}) 
= \frac{d}{d\tau}\bigg|_{\tau=0} \left(e^{\tau A} \mathbf{x}\right) = 
A\bx~,
\end{align}
and
$$
\cL_A(\mathbf{x}) = A \bx \cdot \nabla~.
$$
Now, solving the equation
\begin{align}
    \bx = e^{\tau A}\bx_0
\end{align}
in terms of $\tau$ we obtain
\begin{align}
    \tau = \frac{1}{A}\log \frac{|\bx|^2}{\bx \cdot \bx_0} = \frac{1}{A}\log \frac{|\bx|^2}{|\bx| |\bx_0|} = \frac{1}{A}\log \frac{|\bx|}{|\bx_0|}
    = \frac{1}{2A}\log \frac{|\bx|^2}{|\bx_0|^2}~.
\end{align}
In the usual case of $A=1$ (generator of the Lie algebra), $\bx_0=\frac1{\sqrt{N}}(1,1,\dots,1)^\top$ to be the unit vector we obtain the usual expression as flow coordinate
\begin{align}
    \tau = \frac12\log (x_1^2 + x_2^2 + \cdots + x_N^2)~.
\end{align}

\subsection{\texorpdfstring{$X = \mathbb{R}^3$, $G = \SO(3) \times \R_+^*$}{}}\label{appendix:spherical-so(3)-dilation}

The dilation part is solved in the previous section, so we actually just focus on the action of $\SO(3)$ on $\R^3$. The orbits are given by spheres centered at the origin, and we can decompose the action of $\SO(3)$ by variying the azimuthal or the polar angle defined by a vector $\bx$. Namely, we have the two actions
\begin{align}
    \rho_{\R^3}(\varphi, \bx) &= \begin{pmatrix}
        \cos\varphi & -\sin\varphi & 0 \\
        \sin\varphi & \cos\varphi & 0 \\
        0 & 0 & 1
    \end{pmatrix}
    \begin{pmatrix}
        x\\ y\\ z
    \end{pmatrix}~,\nonumber\\
    \rho_{\R^3}(\theta, \bx) &= \left[I + \sin\theta \begin{pmatrix}
        0 & 0 &\cos\varphi \\
        0 & 0 & \sin\varphi\\
         -\cos\varphi & -\sin\varphi & 0
    \end{pmatrix}\right. \nonumber\\
    & \qquad\qquad\qquad \left.+(1-\cos\theta)
    \begin{pmatrix}
        -\cos^2\varphi & -\cos\varphi\sin\varphi & 0 \\
        -\cos\varphi\sin\varphi &-\sin^2\varphi & 0 \\
        0 & 0 & -1
    \end{pmatrix}
    \right]
    \begin{pmatrix}
        x\\ y\\ z
    \end{pmatrix}~.
\end{align}
If we take the differentials
\begin{align} \label{eq:so(3)-theta_0}
     \left. d\rho_{\R^3}(\varphi, \bx)\right|_{\varphi=0} &=\left. \begin{pmatrix}
        -\sin\varphi & -\cos\varphi & 0 \\
        \cos\varphi & -\sin\varphi & 0 \\
        0 & 0 & 0
    \end{pmatrix}
    \begin{pmatrix}
        x\\ y\\ z
    \end{pmatrix} \right|_{\varphi=0}
    = \begin{pmatrix}
        0 & -1 & 0 \\
        1 & 0 & 0 \\
        0 & 0 & 0
    \end{pmatrix}
    \begin{pmatrix}
        x\\ y\\ z
    \end{pmatrix}=A_z\bx~,\nonumber\\
    d\rho_{\R^3}(\theta, \bx) &= \left[\cos\theta \begin{pmatrix}
        0 & 0 &\cos\varphi \\
        0 & 0 & \sin\varphi\\
         -\cos\varphi & -\sin\varphi & 0
    \end{pmatrix}\right.\nonumber\\
    &\qquad\qquad\qquad \left.-\sin\theta
    \begin{pmatrix}
        \cos^2\varphi & \cos\varphi\sin\varphi & 0 \\
        \cos\varphi\sin\varphi &\sin^2\varphi & 0 \\
        0 & 0 & 1
    \end{pmatrix}
    \right]_{\theta=0}
    \begin{pmatrix}
        x\\ y\\ z
    \end{pmatrix}\nonumber\\
    &= \begin{pmatrix}
        0 & 0 &\cos\varphi \\
        0 & 0 & \sin\varphi\\
         -\cos\varphi & -\sin\varphi & 0
    \end{pmatrix}
    \begin{pmatrix}
        x\\ y\\ z
    \end{pmatrix} = (\cos\varphi A_y - \sin\varphi A_x ) \bx~,
\end{align}
where
\begin{align}\label{eq:so(3)-basis_Ai}
    A_x&=\begin{pmatrix}
        0 & 0 & 0 \\
        0 & 0 & -1 \\
        0 & 1 & 0
    \end{pmatrix}
        &A_y&=\begin{pmatrix}
        0 & 0 & 1 \\
        0 & 0 & 0 \\
        -1 & 0 & 0
    \end{pmatrix}
    &A_z&=\begin{pmatrix}
        0 & -1 & 0 \\
        1 & 0 & 0 \\
        0 & 0 & 0
    \end{pmatrix}
\end{align}
form a basis for $\so(3)$. 
The corresponding differential operators are
\begin{align}
    \cL_\varphi &= x\p_y - y\p_x~, &\cL_\theta &= \frac{1}{\sqrt{x^2+y^2}}\left[zx \p_x + zy\p_y - (x^2 + y^2)\p_z \right]~, 
\end{align}
and it is an easy calculation to show that they commute $[\cL_\varphi, \cL_\theta]=0$~. 
The attentive reader might have noticed that the commutation does not hold at the matrices level. While this is expected, since there is no 2-dimensional commuting subalgebra in $\so(3)$, it is nonetheless quite puzzling since everything works out at the level of differential operators. This reflect the fact that the commutation properties are necessary at the level of the action of $\g$ on $X$, and not necessarily at the Lie algebra level. In this case, however, we can elegantly resolve the puzzle,  we found a matrix representation for the action $d\rho_{\R^3}(\theta)\bx$ which does commute with the $\varphi$ action. To do this we note that we can rewrite
\begin{align}
    \cL_\theta & = \frac{\cos\theta}{\sin\theta}x \p_x + \frac{\cos\theta}{\sin\theta} y \p_y - \frac{\sin\theta}{\cos\theta}z\p_z~,
\end{align}
which corresponds to simultaneous dilations, with different coefficient, in the $z$ axis and $x,y$-plane. The finite action takes the form
\begin{align}
    \widetilde{\rho}_{\R^3}(\theta, \bx) & =\exp\left[ \log\sin\theta \begin{pmatrix}
        1 & 0 & 0 \\
        0 & 1 & 0 \\
        0 & 0 & 0
    \end{pmatrix} + \log\cos\theta \begin{pmatrix}
        0 & 0 & 0 \\
        0 & 0 & 0 \\
        0 & 0 & 1
    \end{pmatrix} \right]~,
\end{align}
and computing the first order term we obtain
\begin{align}\label{eq:so(3)-theta}
    d \widetilde{\rho}_{\R^3}(\theta, \bx) & = \begin{pmatrix}
         \frac{\cos\theta}{\sin\theta} & 0 & 0 \\
        0 &  \frac{\cos\theta}{\sin\theta} & 0 \\
        0 & 0 &  - \frac{\sin\theta}{\cos\theta}
    \end{pmatrix} \bx~.
\end{align}
This matrix is diagonal and it trivially commutes with $A_z$. The price we had to pay to realize a system of commuting matrices is that in $\widetilde{\rho}$ the flow parameter $\theta$ appear non-linearly, thus we traded-off commutativity at the level of the Lie algebra matrices for the linearity of the flow parameters at the group level. We remark that both give rise to the same differential operator on $X$, which is the relevant object for our purposes. 

\subsection{\texorpdfstring{$X=\R^{3N}$}{X=\R{3N}} and global \texorpdfstring{$\SO(3)$}{SO(3)}}
\label{app:global_E3}

\begin{figure}[t]
    \centering
    \includegraphics[width=\textwidth]{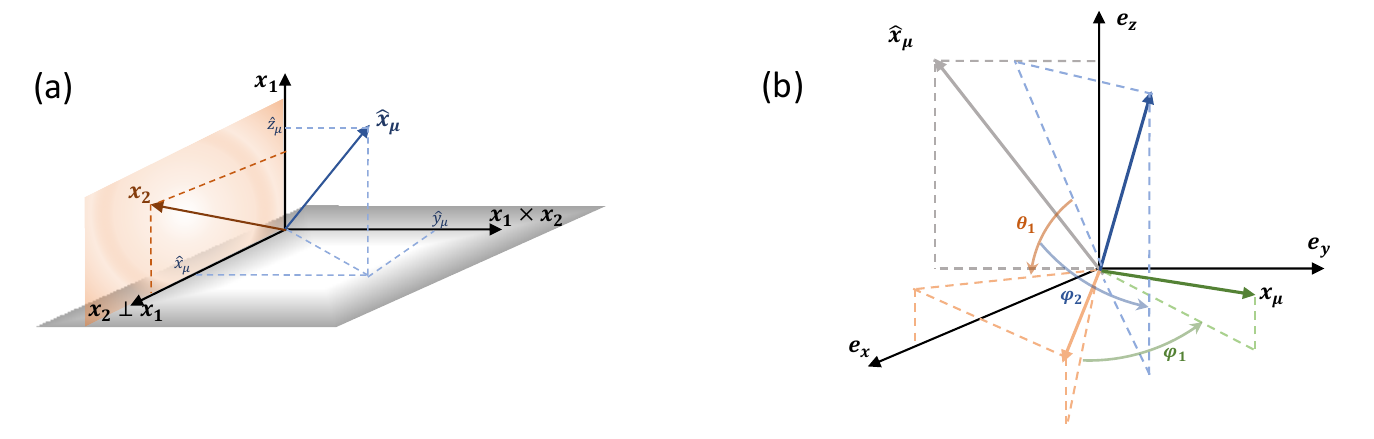}
    \caption{(a) The coordinates $\hbx_\mu$ are the coordinates in the coordinate system defined by $\bx_1$, the orthogonal projection of $\bx_2$ with respect to $x_1$. $\bx_2\perp\bx_1=\bx_2 - \bx_1\cdot \bx_2$, and $\bx_2\times\bx_1$. (b) Graphical depiction of the global symmetry transformations parametrized by the three angles $\varphi_2, \theta_1, \varphi_1$.}
    \label{fig:globalSO3}
\end{figure}

Let $X=\R^{3N}$ be parametrized by $\bx_{i=1,\dots,N}$. We can describe a global $\SO(3)$ action as follows
\begin{align}
    \bx_1 &= R_{\be_z}(\varphi_1) R_{\be_y}(\theta_1) \begin{pmatrix}
        0 \\ 0 \\ \hz_1
    \end{pmatrix}~,\nonumber\\
    \bx_2 &= R_{\be_z}(\varphi_1) R_{\be_y}(\theta_1) R_{\be_z}(\varphi_2) \begin{pmatrix}
    \hx_2 \\ 0 \\ \hz_2
\end{pmatrix}~, \nonumber\\
    \bx_{\mu=3,\dots,N} &= R_{\be_z}(\varphi_1) R_{\be_y}(\theta_1) R_{\be_z}(\varphi_2) \hbx_\mu~,
\end{align}
where $R_{\ba}(\omega)$ represents a rotation of an angle $\omega$ around the axis $\ba$. We can then derive the operator $\bPi \in \R^{3N\times3N}$ as follows. Let $R'(\omega)$ be the matrix where
we take the partial derivative with respect to $\omega$ of all elements of $R$. Then
\begin{align}
    \Pi_{\varphi_1} &=  
    \begin{pmatrix}
       [A_z \bx_1]^\top & [A_z \bx_2]^\top & \cdots & [A_z \bx_N]^\top
    \end{pmatrix}^\top\nonumber\\
    \Pi_{\theta_1} &=\begin{pmatrix}
       (\cos\varphi_1 A_y - \sin\varphi_1 A_x)\bx_1  \\ (\cos\varphi_1 A_y - \sin\varphi_1A_x)\bx_2 \\ \vdots \\ (\cos\varphi_1 A_y - \sin\varphi_1A_x)\bx_N
    \end{pmatrix}\nonumber\\
        \Pi_{\varphi_2} &=\begin{pmatrix}
       \boldsymbol{0} \\ (\sin\theta_1\cos\varphi_1 A_x + \sin\theta_1\sin\varphi_1 A_y+\cos\theta_1 A_z)\bx_2 \\ \vdots \\ (\sin\theta_1\cos\varphi_1 A_x + \sin\theta_1\sin\varphi_1 A_y+\cos\theta_1 A_z)\bx_N
    \end{pmatrix}
\end{align}
Notice that these do represent global rotations since it is easy to see that $(\sin\theta_1\cos\varphi_1 A_x + \sin\theta_1\sin\varphi_1 A_y+\cos\theta_1 A_z)\bx_1 = \boldsymbol{0}$.
Formally, the true  Lie algebra elements are $3\times3$ matrices of the form
\begin{align}
    A_\varphi = \begin{pmatrix}
        A_z & 0 & 0 & \cdots & 0 \\
    0 & A_z & 0 & \cdots & 0 \\
    0 & 0 & A_z & \cdots & 0 \\
    \vdots & \vdots & \vdots & \ddots & \vdots \\
    0 & 0 & 0 & \cdots & A_z
    \end{pmatrix}
\end{align}
and similarly for the other operators. 
Now, for the inverse
relations we have
\begin{align}
\theta_1 &= \arccos \frac{z_1}{(x_1^2+y_1^2+z_1^2)^{1/2}}~,\nonumber\\
\varphi_1 &=\text{sgn}(y_1)  \arccos \frac{x_1}{(x_1^2+y_1^2)^{1/2}}~,\nonumber\\
\varphi_2 &= \arctan \frac{\ty_2}{\tx_2}~,
\end{align}
where $\tbx_2 = R_{\be_y}(\theta_1)^{-1} R_{\be_z}(\varphi_1)^{-1} \bx_2 = R_{\be_y}(-\theta_1) R_{\be_z}(-\varphi_1) \bx_2$.
\subsection{\texorpdfstring{$X=\R^{4}, G=\SO(4)\times \R_+$}{X=R4, G=SO(4) x R+}}
Now we look at the case of a higher dimensional Lie group, namely $G=\SO(4)\times\R_+$. The parametrization is given by
\begin{align}
    x_1 &= e^r \cos\varphi_1 ~, \nonumber\\
    x_2 &= e^r \sin\varphi_1 \cos\varphi_2~, \nonumber\\
    x_3 &= e^r \sin\varphi_1 \sin\varphi_2 \cos\varphi_3~, \nonumber\\
    x_4 &= e^r \sin\varphi_1 \sin\varphi_2\sin\varphi_3~.
\end{align}
The Lie algebra elements corresponding to the $\SO(4)$ flow coordinates are
\begin{align}
\label{eq:sO(3)liematrix}
    A_{\varphi_1} & = \begin{pmatrix}
        0 &-\cos\varphi_2 & -\sin\varphi_2\cos\varphi_3 & -\sin\varphi_2\sin\varphi_3 \\
        \cos\varphi_2 & 0 & 0 & 0 \\
        \sin\varphi_2\cos\varphi_3 & 0 & 0 & 0\\
        \sin\varphi_2\sin\varphi_3 & 0 & 0 & 0
    \end{pmatrix}~, \nonumber\\
    A_{\varphi_2} & = \begin{pmatrix}
        0 &0 & 0 & 0 \\
        0 & 0 & -\cos\varphi_3 & -\sin\varphi_3 \\
        0 & \cos\varphi_3 & 0 & 0\\
        0 & \sin\varphi_3 & 0 & 0
    \end{pmatrix}~, \nonumber\\
    A_{\varphi_3} & = \begin{pmatrix}
        0 &0 & 0 & 0 \\
        0 & 0 & 0 & 0 \\
        0 & 0 & 0 & -1\\
        0 & 0 & 1 & 0
    \end{pmatrix}~. 
\end{align}
Next, we compute the three non-trivial commutators (note that an operators with itself always commute).
First, we list the differential operators 
\begin{align}
\cL_{\varphi_1} &= \frac{1}{\sqrt{x_2^2+x_3^2+x_4^2}} \left[ x_1x_2 \partial_2 + x_1x_3\partial_3 +x_1x_4 \partial_4 - (x_2^2+x_3^2+x_4^2)\partial_1 \right]~,\nonumber\\ 
\cL_{\varphi_2} &= \frac{1}{\sqrt{x_3^2+x_4^2}} \left[ x_2x_3\partial_3 +x_2x_4 \partial_4 - (x_3^2+x_4^2)\partial_2 \right]~,\nonumber\\
\cL_{\varphi_3} &= x_3\partial_4 -x_4 \partial_3~.
\end{align}
where we used the notation $\partial_i = \partial_{x_i}$. These follow directly from \eqref{eq:sO(3)liematrix} together with $L_{\varphi_i} = A_{\varphi_i} \boldsymbol{x} \cdot \nabla$, and using the relations
\begin{align}
\sin\varphi_3= \frac{x_4}{\sqrt{x_3^2+x_4^2}}, \quad \cos\varphi_3= \frac{x_3}{\sqrt{x_3^2+x_4^2}}, \quad \cos\varphi_2= \frac{x_2}{\sqrt{x_2^2+x_3^2+x_4^2}}, \quad \sin\varphi_2= \frac{\sqrt{x_3^2+x_4^2}}{\sqrt{x_2^2+x_3^2+x_4^2}}~.
\end{align}
Specifically, we have
\begin{align}
    [\cL_{\varphi_2}, \cL_{\varphi_3}] &=\frac{1}{\sqrt{x_3^2+x_4^2}} [ -x_2x_4\partial_3 +x_2x_3 \partial_4] - \frac{-x_3x_4+x_4x_3}{({x_3^2+x_4^2})^{1/2}} \left[ x_2x_3\partial_3 +x_2x_4 \partial_4 - (x_3^2+x_4^2)\partial_2 \right] \nonumber\\
    &\quad- \frac{1}{\sqrt{x_3^2+x_4^2}}\left[ x_2x_3\partial_3 -2x_3x_4\partial_2 - x_2x_4\partial_4 +2x_3x_4\partial_1 \right] \nonumber\\ 
    &=\frac{1}{\sqrt{x_2^2+x_3^2+x_4^2}} \left[ x_1x_3\partial_4 -x_1x_4 \partial_3\right] - \frac{1}{\sqrt{x_2^2+x_3^2+x_4^2}}\left[ x_1x_3\partial_4 -2x_3x_4^2\partial_1 - x_1x_4\partial_3 +2x_3x_4\partial_1 \right]\nonumber\\
    &=0~.
    \end{align}
    \begin{align}
    [\cL_{\varphi_1}, \cL_{\varphi_3}] &=\frac{1}{\sqrt{x_2^2+x_3^2+x_4^2}} \left[ x_1x_3\partial_4 -x_1x_4 \partial_3\right]- \frac{-x_3x_4+x_4x_3}{({x_2^2+x_3^2+x_4^2})^{3/2}}\left[ x_1x_2 \partial_2 + x_1x_3\partial_3 +x_1x_4 \partial_4 - (x_2^2+x_3^2+x_4^2)\partial_1 \right]\nonumber\\
    &\quad- \frac{1}{\sqrt{x_2^2+x_3^2+x_4^2}}\left[ x_1x_3\partial_4 -2x_3x_4\partial_1 - x_1x_4\partial_3 +2x_3x_4\partial_1 \right] \nonumber\\ 
    &=\frac{1}{\sqrt{x_2^2+x_3^2+x_4^2}} \left[ x_1x_3\partial_4 -x_1x_4 \partial_3\right] - \frac{1}{\sqrt{x_2^2+x_3^2+x_4^2}}\left[ x_1x_3\partial_4 -2x_3x_4\partial_1 - x_1x_4\partial_3 +2x_3x_4\partial_1 \right]\nonumber\\
    &=0~,
    \end{align}
    \begin{align}
    [\cL_{\varphi_1}, \cL_{\varphi_2}] &= \frac{x_3\partial_3+x_4\partial_4 }{\sqrt{x_2^2+x_3^2+x_4^2}} \frac{x_1x_2}{\sqrt{x_3^2+x_4^2}} +\frac{1}{\sqrt{x_2^2+x_3^2+x_4^2}} \frac{-x_1x_3^2-x_1x_4^2}{(x_3^2+x_4^2)^{3/2}} \left[ x_2x_3\partial_3 +x_2x_4 \partial_4 - (x_3^2+x_4^2)\partial_2 \right]\nonumber\\
    &\quad+\frac{1}{\sqrt{x_2^2+x_3^2+x_4^2}} \frac{1}{(x_3^2+x_4^2)^{1/2}} \left[ x_1x_2x_3\partial_3 -2x_1x_3^2\partial_2-2x_1x_4^2\partial_2+ x_1x_2x_4 \partial_4 \right]\nonumber\\
    &\quad- \frac{1}{(x_3^2+x_4^2)^{1/2}} \frac{-x_2x_4^2-x_2x_3^2+(x_3^2+x_4^2)x_2}{({x_2^2+x_3^2+x_4^2})^{3/2}}\left[ x_1x_2 \partial_2 + x_1x_3\partial_3 +x_1x_4 \partial_4 - (x_2^2+x_3^2+x_4^2)\partial_1 \right] \nonumber\\
    &\quad- \frac{1}{(x_3^2+x_4^2)^{1/2}}\frac{1}{\sqrt{x_2^2+x_3^2+x_4^2}}\left[ x_1x_2x_4\partial_4 -2x_2x_4^2\partial_1 + x_1x_2x_3\partial_3 -2x_2x_3^2\partial_1 -(x_3^2 +x_4^2)(x_1\partial_2 - 2x_2 \partial_1) \right]\nonumber\\ 
    &= \frac{1}{\sqrt{x_2^2+x_3^2+x_4^2}} \frac{-x_1(x_3^2+x_4^2)}{(x_3^2+x_4^2)^{3/2}} \left[ x_2x_3\partial_3 +x_2x_4 \partial_4 - (x_3^2+x_4^2)\partial_2 \right]\nonumber\\
    &\quad+ \frac{1}{\sqrt{x_2^2+x_3^2+x_4^2}} \frac{1}{(x_3^2+x_4^2)^{1/2}} \left[ x_1x_2x_3\partial_3 -2x_1x_3^2\partial_2-2x_1x_4^2\partial_2+ x_1x_2x_4 \partial_4 \right] \nonumber\\
    &\quad- \frac{1}{(x_3^2+x_4^2)^{1/2}}\frac{1}{\sqrt{x_2^2+x_3^2+x_4^2}}\left[ -2x_2x_4^2\partial_1 -2x_2x_3^2\partial_1 -(x_3^2 +x_4^2)(x_1\partial_2 - 2x_2 \partial_1) \right] \nonumber\\
    &= \frac{1}{\sqrt{x_2^2+x_3^2+x_4^2}} \frac{-x_1}{(x_3^2+x_4^2)^{1/2}} \left[ - (x_3^2+x_4^2)\partial_2 \right]+\frac{1}{\sqrt{x_2^2+x_3^2+x_4^2}} \frac{1}{(x_3^2+x_4^2)^{1/2}} \left[ -2x_1x_3^2\partial_2-2x_1x_4^2\partial_2 \right]
    \nonumber\\
    &\quad- \frac{1}{(x_3^2+x_4^2)^{1/2}}\frac{1}{\sqrt{x_2^2+x_3^2+x_4^2}}\left[ -(x_3^2 +x_4^2)x_1\partial_2 \right]\nonumber\\
    &=0~.
    \end{align}

\subsection{\texorpdfstring{$G=\SO(N)$}{G=SO(N)}}

We present here the formalism for the $G=\SO(N)$ for any $N\geq4$. The parametrization is given by \citep{blumenson1960derivation}
\begin{align}
    x_1 &= e^r \cos\varphi_1 ~, \nonumber\\
    x_2 &= e^r \sin\varphi_1 \cos\varphi_2~, \nonumber\\
    x_3 &= e^r \sin\varphi_1 \sin\varphi_2 \cos\varphi_3~, \nonumber\\
    \vdots\nonumber\\
    x_j &= e^r \sin\varphi_1 \sin\varphi_2\sin\varphi_3 \cdots \sin\varphi_{j-1}\cos\varphi_j~,\nonumber\\
    \vdots \nonumber\\
    x_{n-1} &= e^r \sin\varphi_1 \sin\varphi_2\sin\varphi_3 \cdots \sin\varphi_{n-2}\cos\varphi_{n-1}~,\nonumber\\
    x_n &= e^r \sin\varphi_1 \sin\varphi_2\sin\varphi_3 \cdots \sin\varphi_{n-2}\sin\varphi_{n-1}~.
\end{align}
The corresponding Lie algebra elements are given by
\begin{align}
    A_{\varphi_{n-1}} & = \begin{pmatrix}
        0 &\cdots & 0 & 0 \\
        0 & \ddots & \vdots & \vdots \\
        0 & \cdots & 0 & -1\\
        0 & \cdots & 1 & 0
    \end{pmatrix}~,
    A_{\varphi_{n-2}}  = \begin{pmatrix}
        0 &\cdots & 0 & 0 & 0\\
        0 &\ddots &\vdots & \vdots & \vdots \\
        0 &\cdots & 0 & -\cos\varphi_{n-1} & -\sin\varphi_{n-1} \\
        0 &\cdots & \cos\varphi_{n-1} & 0 & 0\\
        0 &\cdots & \sin\varphi_{n-1} & 0 & 0
    \end{pmatrix}~, \nonumber\\
    A_{\varphi_{n-3}} & = \begin{pmatrix}
        0 &\cdots  & 0 & 0 & 0 &0\\
        0 &\ddots &\vdots & \vdots & \vdots & \vdots \\
        0 &\cdots  &0 &-\cos\varphi_{n-2} & -\sin\varphi_{n-2}\cos\varphi_{n-1} & -\sin\varphi_{n-2}\sin\varphi_{n-1} \\
        0 &\cdots  &\cos\varphi_{n-2} & 0 & 0 & 0 \\
        0 &\cdots  & \sin\varphi_{n-2}\cos\varphi_{n-1} & 0 & 0 & 0\\
        0 &\cdots  &\sin\varphi_{n-2}\sin\varphi_{n-1} & 0 & 0 & 0
    \end{pmatrix}~,\nonumber\\
    \vdots\nonumber\\
         A_{\varphi_j} & = \frac{1}{x_j}\begin{pmatrix}
        0 & \cdots  & 0 & 0 &0 & \cdots & 0\\
        0 &\ddots &\vdots & \vdots & \vdots & \vdots &\vdots \\
        0 &\cdots &0 &-x_{j+1} &-x_{j+2} & \cdots & -x_n \\
        0 &\cdots &x_{j+1} &0 & 0 & \cdots & 0 \\
        0 &\cdots &x_{j+2} & 0 & 0 & \cdots & 0\\
        0&\cdots  &\vdots &\vdots &\vdots &\ddots &\vdots\\
        0&\cdots  &x_n &0 & 0 & \cdots & 0
    \end{pmatrix}~,\nonumber\\
    \vdots\nonumber\\
     A_{\varphi_{1}} & = \frac{1}{x_1}\begin{pmatrix}
        0 & -x_2  & -x_3 & -x_4 & \cdots & -x_n\\
        x_2 &\ddots &\vdots & \vdots & \vdots & \vdots \\
        x_3 &\cdots  &0 &0 & 0 & 0 \\
        x_4 &\cdots  &0 & 0 & 0 & 0 \\
        \vdots &\cdots  & 0 & 0 & 0 & 0\\
        x_n &\cdots  &0 & 0 & 0 & 0
    \end{pmatrix}~.
\end{align}

\section{Proofs of condition for suitable Lie group}
\label{app:proofs_conditions}

Here we provide the statements with proofs of the results in Section \ref{ss:g_conditions}.
\begin{proposition}
\label{prop:Pi_complete_app}
    The linear operator induced by $\bPi$ is complete if $\bPi$ is the local frame of a vector bundle $E$ over $X$ whose rank is $n\geq \dim X$ almost everywhere. If $\rank\ E = n$ everywhere, then $E=TX$, the tangent bundle of $X$. 
\end{proposition}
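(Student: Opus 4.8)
The plan is to unwind the definition of completeness into a statement about functions annihilated by every $\cL_{A_i}$. Recall that $\bPi$ being complete means: for any two densities $p,q$, if $\bcL p = \bcL q$ almost everywhere then $p=q$ almost everywhere, where $\bcL=\bPi(\bx)^\top\nabla$. By linearity of $\bcL$ it suffices to show that any $h$ which is a difference of two densities and satisfies $\bcL h = 0$ a.e.\ must vanish a.e. Written componentwise, $\bcL h = 0$ says $\cL_{A_i} h = \Pi_{A_i}(\bx)\cdot\nabla h(\bx) = 0$ for every basis element $A_i$ and a.e.\ $\bx$; that is, $\nabla h(\bx)$ is orthogonal to $\Span\{\Pi_{A_i}(\bx)\}$ at a.e.\ $\bx$.

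First I would record the elementary geometric fact underlying the statement. Each fundamental vector field $\Pi_{A_i}$ is a section of $TX$, so the fibers $E_\bx = \Span\{\Pi_{A_i}(\bx)\}$ are subspaces of $T_\bx X$, whence $\rank E \le \dim X$ at every point. Together with the hypothesis $\rank E = n \ge \dim X$ a.e., this pins the rank down: $\rank E = \dim X$ a.e.\ (so the ``$n$'' in the hypothesis is forced to equal $\dim X$), i.e.\ the columns of $\bPi(\bx)$ span all of $T_\bx X\cong\R^n$ for a.e.\ $\bx$. In the stronger situation where $\rank E = n$ everywhere, the same dimension count gives $E_\bx = T_\bx X$ at every $\bx$, that is $E = TX$, which is the final assertion of the proposition.

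With full rank at a.e.\ point, the orthogonality condition forces $\nabla h(\bx) = 0$ for a.e.\ $\bx$: the measure-zero set where the rank drops and the measure-zero set where $\bcL h \ne 0$ together form a null set, off which $\nabla h$ is orthogonal to the whole tangent space and hence zero. Assuming the densities are regular enough that $h$ lies in $W^{1,1}_{\mathrm{loc}}(X)$ (smooth densities, as customarily assumed in score matching, certainly qualify), vanishing of the weak gradient a.e.\ implies $h$ agrees a.e.\ with a locally constant function; since $X$ is connected this is a single constant $c$. Integrating, $c\cdot\mathrm{vol}(X) = \int_X h = \int_X p - \int_X q = 0$, and whether $\mathrm{vol}(X)$ is finite or infinite the fact that $h\in L^1$ forces $c = 0$. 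Hence $p = q$ a.e., i.e.\ completeness. The same argument applies verbatim with $\bcL\log p = \bcL\log q$ in place of $\bcL p = \bcL q$, taking $h = \log(p/q)$ and using $\int_X p = \int_X q = 1$.

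The step I expect to be the main obstacle is precisely the passage from ``$\nabla h = 0$ a.e.'' to ``$h$ constant'': this implication fails for merely measurable $h$ (an indicator of an interval is a counterexample), so it genuinely requires weak differentiability of the densities and connectedness of $X$, and one must check that the measure-zero locus on which $\rank E < \dim X$ causes no trouble — which it does not, since the Sobolev characterization of constancy is insensitive to null sets. A lesser point worth making explicit is the reading of ``$\bPi$ frames a vector bundle $E$'': in general $E$ should be understood as the (a priori possibly singular) distribution spanned pointwise by the $\Pi_{A_i}$, and the clean conclusion $E = TX$ is available exactly when the rank is full everywhere, not merely a.e.
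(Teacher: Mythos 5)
Your proposal is correct and follows essentially the same route as the paper's proof: reduce completeness to showing that the gradient of the (log-)ratio of densities vanishes a.e.\ on the full-rank locus, then conclude constancy and invoke normalization. You are, however, more careful than the paper on exactly the points you flag — the paper passes from ``$\nabla\log(p/q)=0$ on $U$'' directly to ``$p/q$ is a single constant'' without addressing connectedness of $X$ or the regularity needed to rule out a measurable-but-nonconstant $h$ with vanishing a.e.\ gradient, and your Sobolev argument supplies precisely what is missing there.
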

\begin{proof}
We start by noting that, given the expression of the fundamental fields as derivations, we can write $\bcL(\bx) = \bPi(\bx)^\top \nabla$. Let $\pi: E\rightarrow X$ be the projection map, then $\rank \pi^{-1}(\bx) = \min (\rank \bPi(\bx), n)$, since $\rank \nabla = n$. 
Now, consider $\bcL \log p(\bx) = \bcL\log q(\bx)$, which implies $\bcL \log \frac{p(\bx)}{q(\bx)}=0$. Let $U\subseteq X$ such that $\rank \bPi \geq n$ $\forall \bx\in U$, and by assumption $X \setminus U$ has measure zero. Then the above holds if and only if
$\nabla \log \frac{p(\bx)}{q(\bx)}=0$, which implies $\frac{p(\bx)}{q(\bx)} = c$, constant $\forall \bx\in U$. Now, $p(\bx)$ and $q(\bx)$ are probability densities by assumption, thus $c=1$, which proves the claim. 
\end{proof}
\begin{proposition}
\label{prop:main_app}
    The operator $\bPi$ induced by $\g$ is complete if and only if 
    the subspace $U\subseteq X$ such that $\dim \frac{G}{G_\bx} < n$ for $\bx\in U$, where $n=\dim X$, has measure zero in $X$.
\end{proposition}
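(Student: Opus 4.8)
The plan is to reduce both implications to Proposition~\ref{prop:Pi_complete_app} via the pointwise identity
$\rank\bPi(\bx) = \dim(G/G_\bx)$, and then, for the ``only if'' direction, to convert the degeneracy of $\bPi$ on a positive-measure set into an explicit pair of distinct densities with the same generalized score. The first task is to establish the rank identity. Column by column, $\bPi(\bx)$ is the image of the chosen basis $A_1, A_2,\dots$ of $\g$ under the linear map $\g\to T_\bx X$, $A\mapsto\Pi_A(\bx)$ of \eqref{eq:fund_vectorfields}. Its kernel is exactly $\g_\bx$, the Lie algebra of the stabilizer $G_\bx$: if $\Pi_A(\bx)=0$ then $\bx$ is a stationary point of the flow $\xi_A$, hence $\exp(\tau A)\cdot\bx=\bx$ for all $\tau$, i.e.\ $\exp(\tau A)\in G_\bx$; the converse is immediate. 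Therefore $\rank\bPi(\bx)=\dim\g-\dim\g_\bx=\dim G-\dim G_\bx=\dim(G/G_\bx)$, which is also $\dim T_\bx(G\cdot\bx)$. Since $\rank\bPi(\bx)\le\dim T_\bx X=n$ always, the set $U$ in the statement is precisely $U=\{\bx:\rank\bPi(\bx)<n\}$, the complement of $\{\bx:\rank\bPi(\bx)=n\}$.

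\textbf{The ``if'' direction.} If $\mu(U)=0$ then $\rank\bPi(\bx)=n$ for almost every $\bx$, which is exactly the hypothesis used in the proof of Proposition~\ref{prop:Pi_complete_app} (there, $\rank\bPi\ge n$ a.e.\ forces $\nabla\log(p/q)=0$ a.e., hence $p=q$). So $\bcL=\bPi^\top\nabla$ is complete.

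\textbf{The ``only if'' direction.} Suppose $\mu(U)>0$. First note $U$ is $G$-invariant: stabilizers along an orbit are conjugate, so $\dim(G/G_\bx)$ is constant on orbits, and thus $U$ is a union of orbits, each of dimension $<n$ and hence of measure zero in $X$ — a positive-measure set foliated by lower-dimensional orbits. The fundamental fields span a distribution $D$ that is involutive ($[\Pi_A,\Pi_B]=\pm\Pi_{[A,B]}$ is again a fundamental field), with the orbits as integral manifolds. Pick $\bx_0\in U$ at which $\rank\bPi$ is locally constant, say $=k<n$; near $\bx_0$, Frobenius gives coordinates $(y_1,\dots,y_n)$ in which $D=\Span(\partial_{y_1},\dots,\partial_{y_k})$, so any function of $(y_{k+1},\dots,y_n)$ alone is annihilated by $\bcL$. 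Globalizing this local orbit-invariant function to a smooth, bounded, non-constant $h\colon X\to\R$ with $\bcL h\equiv 0$ (for the groups of interest this is routine — e.g.\ averaging a bump function over a compact $G$, or pulling back a non-constant function from a local transversal using a partition of unity), one then takes any smooth everywhere-positive reference density $q$ (a Gaussian) with $Z:=\int_X e^h q\,d\bx<\infty$ and sets $p:=e^h q/Z$. Then $p\neq q$ on a set of positive measure because $h$ is non-constant, yet $\bcL\log p=\bcL h+\bcL\log q-\bcL\log Z=\bcL\log q$; equivalently, in the linear form of the completeness condition, $q+\varepsilon h_0$ for a compactly supported $h_0$ with $\bcL h_0=0$, $\int h_0=0$, and $\varepsilon$ small is a density with $\bcL(q+\varepsilon h_0)=\bcL q$. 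Hence $\bcL$ is not complete, completing the contrapositive.

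\textbf{Main obstacle.} The rank identity and the ``if'' direction are straightforward; the genuinely delicate point is the globalization in the ``only if'' direction — producing a globally defined, bounded, non-constant $h$ with $\bcL h\equiv 0$ from the local Frobenius solution, together with the existence of a sufficiently large set of points where $\rank\bPi$ is locally constant. For proper actions (and for all examples used in this paper) the orbit space is well behaved and this is immediate; in full generality it either requires a regularity hypothesis on the action or is stated almost-everywhere, consistently with Proposition~\ref{prop:Pi_complete_app}.
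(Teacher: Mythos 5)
Your rank identity ($\rank\bPi(\bx)=\dim G/G_\bx$, via the kernel-equals-stabilizer-algebra argument) and your ``if'' direction coincide with the paper's, which also passes through Proposition~\ref{prop:Pi_complete_app}. Where you genuinely diverge is the ``only if'' direction. The paper's proof of that direction reads ``Suppose $\bPi$ is complete. Then, from Proposition~\ref{prop:Pi_complete_app} the rank of $\bPi(\bx)$ is $\geq n$ almost everywhere'' --- but Proposition~\ref{prop:Pi_complete_app} only asserts the implication (rank $\geq n$ a.e.) $\Rightarrow$ (complete), so this applies the earlier result in the direction it does not establish. You instead prove the honest contrapositive: assuming $\mu(U)>0$, you build a non-constant $h$ with $\bcL h\equiv 0$ via Frobenius and exhibit two distinct densities with the same generalized score. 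This supplies precisely the converse content that the paper's appeal to Proposition~\ref{prop:Pi_complete_app} is missing, so your route is both different from and more rigorous than the paper's.

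That said, the globalization step you flag as ``the genuinely delicate point'' is a genuine gap, not just an aesthetic one. A Frobenius chart requires a point $\bx_0$ where $\rank\bPi$ is locally constant and $<n$, which already forces $\bx_0\in\mathrm{int}(U)$; but a closed set of positive Lebesgue measure (and $U$ is closed since $\rank\bPi$ is lower semicontinuous) can have empty interior. Worse, any continuous orbit-invariant $h$ is necessarily constant on each connected component of the open set $X\setminus U$, where orbits are open; so if $U$ has positive measure and empty interior, continuity forces $h$ to be globally constant and the counterexample evaporates. This pathology is ruled out for real-analytic $\bPi$ --- then $\rank\bPi$ is a.e.\ constant on each component, so $\mu(U)>0$ forces $U$ to be conull and closed, hence all of $X$, with a dense open stratum of constant rank on which Frobenius and a partition-of-unity globalization work --- which covers every group in the paper. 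Your parenthetical is therefore accurate in spirit, but the proposition as stated needs either a real-analyticity/regularity hypothesis on the action or an explicit restatement of ``complete'' in the almost-everywhere sense before the construction closes the argument in full generality.
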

\begin{proof}
First, we recall that the dimension of an orbit $\cO_\bx$ of $\bx \in X$ equals the dimension of the image of the map $d\rho_\bx: \g \rightarrow T_\bx X: \bA \mapsto \bPi(\bx)$. 
   Suppose first that $\bPi$ is complete. Then, from Proposition \ref{prop:Pi_complete_app} the rank of $\bPi(\bx)$ is $\geq n$ almost everywhere, and therefore $\dim G/G_\bx \geq n$ almost everywhere, which implies one direction of the claim.
   The reverse is quite straightforward. Assume that the rank of $\bPi(\bx)$ is $\geq n$ almost everywhere. As $\bPi$ represent the action of the infinitesiamal 
   transformations of $G$, it means that locally $G$ cannot fix points in $X$, thus proving the claim.
\end{proof}

\section{Proof of main theorem}
\label{app:proof_theorem}

Here we provide the full proof of Theorem \ref{th:main_theorem}:
\begin{theorem}
    Let $G$ be a Lie group acting on $X$ satisfying the conditions of Section \ref{ss:g_conditions}, and let $\g$ be its Lie algebra. 
    The pair of SDEs
    \begin{align}
    \label{eq:SDEforward_app}
     d\bx & =
        \left[\beta(t)\bPi(\bx) \bbf(\bx) +\frac{\gamma(t)^2}{2} \rho_X(\Omega) \right]dt  + \gamma(t) \bPi(\bx) d\bW~,\\
        \label{eq:SDEbackward_app}
              d\bx &= \left[ \beta(t)\bPi(\bx)\bbf(\bx)  - \frac{\gamma^2(t)}2 \rho_X(\Omega)
     -\gamma^2(t) \bPi(\bx) \nabla^\top \cdot \bPi(\bx)\right. \nonumber\\
     &\qquad\qquad\qquad\qquad\qquad\qquad \left.- \gamma(t)^2\bPi(\bx)\bcL \log p_t(\bx)\right] 
           dt +\gamma(t) \bPi(\bx)  d\bW ~,
\end{align}
where $\beta,\gamma:\R \rightarrow \R$ are time-dependent functions, $\bPi: \R^n \rightarrow \R^{n\times n}$ the fundamental vector fields,  $\bbf:\R^n \rightarrow \R^n$ the drift, $\Omega = \sum_i A_i^2$ is the quadratic Casimir element of $\g$, and $\bcL=\bPi(\bx)^\top \nabla $ is such that
\begin{enumerate}
    \item The forward-time SDE \eqref{eq:SDEforward_app} is exactly solvable, with solution
    \begin{align}
    \label{eq:SDE_solution_app}
        \bx(t) = \left(\prod_i O_i(\tau_i(t)) \right)\bx(0) = \left( \prod_{i=1}^ne^{\tau_i(t) A_i} \right) \bx(0)~, 
    \end{align}
    where $O_i=e^{\tau_i(t) A_i}$ is the finite group action and $\btau(t)$ is the solution to the SDE
    \begin{align}
    \label{eq:forward-sde-flow-coordinates_app}
        d\btau(\bx) 
        & = \beta(t)\bbf(\bx)dt +\gamma(t) d\bW ~.
    \end{align}
    \item The SDE \eqref{eq:SDEbackward} is the reverse-time process of \eqref{eq:SDEforward}.
    \item The Langevin dynamic of the above SDEs decomposes as a direct sum of $\g$ infinitesimal actions \eqref{eq:fund_vectorfields}, each defining an infinitesimal transformation along the flows $\xi_{\btau}$.
\end{enumerate}
\end{theorem}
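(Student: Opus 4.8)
The plan is to prove the three items in order: item~1 by Itô's lemma applied to the change of variables $\bx=\Phi(\btau)$, item~2 by the Anderson--Song time-reversal formula for SDEs, and item~3 as a structural read-off of \eqref{eq:SDEforward_app}--\eqref{eq:SDEbackward_app}. For item~1 I would first use Condition~1 (Proposition~\ref{prop:Pi_complete_app}) to note that $\bPi(\bx)$ has full rank $n=\dim X$, hence is invertible, and Condition~3 to conclude that the fundamental vector fields commute on $X$ and therefore integrate to a local chart $\bx=\Phi(\btau)$ with $\Phi(\btau)=\bigl(\prod_i e^{\tau_i A_i}\bigr)\bx(0)$, independent of the order of the factors. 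Differentiating the chart gives $\partial\Phi/\partial\btau=\bPi(\bx)$ (column $i$ being $\Pi_{A_i}(\Phi)=A_i\Phi$) and $\partial^2\Phi/\partial\tau_i^2=A_i^2\Phi$. Taking $\btau(t)$ to solve the linear SDE \eqref{eq:forward-sde-flow-coordinates_app} and using that the driving Brownian motions are independent, so $d[\tau_i,\tau_j]_t=\gamma(t)^2\delta_{ij}\,dt$, Itô's lemma for $\bx(t)=\Phi(\btau(t))$ produces
\[
 d\bx=\beta(t)\bPi(\bx)\bbf(\bx)\,dt+\tfrac12\gamma(t)^2\sum_i A_i^2\bx\,dt+\gamma(t)\bPi(\bx)\,d\bW,
\]
and $\sum_i A_i^2\bx=\rho_X(\Omega)$ is exactly the Casimir term of \eqref{eq:SDEforward_app}; strong uniqueness then identifies \eqref{eq:SDE_solution_app} as \emph{the} solution, and when $\bbf$ is affine in $\btau$ the coordinate process is Ornstein--Uhlenbeck, hence Gaussian and fully explicit.

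For item~2 I would apply the reverse-time SDE formula to \eqref{eq:SDEforward_app}: with state-dependent diffusion $G(\bx,t)=\gamma(t)\bPi(\bx)$ the reverse drift equals the forward drift minus $\nabla\!\cdot\!\bigl[\gamma^2\bPi\bPi^\top\bigr]$ minus $\gamma^2\bPi\bPi^\top\nabla\log p_t$, with the diffusion unchanged. The computational kernel is the identity
\[
 \nabla\!\cdot\!\bigl[\bPi(\bx)\bPi(\bx)^\top\bigr]=\rho_X(\Omega)+\bPi(\bx)\bigl(\nabla^\top\!\cdot\bPi(\bx)\bigr),
\]
which I would verify componentwise from $\partial_j(A_k\bx)_i=(A_k)_{ij}$: the derivative falling on the first factor contributes $\sum_k A_k^2\bx=\rho_X(\Omega)$, and on the second factor $\sum_k(\Tr A_k)\,A_k\bx=\bPi(\nabla^\top\!\cdot\bPi)$. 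Substituting this, rewriting $\bPi\bPi^\top\nabla=\bPi\bcL$, and cancelling the $\tfrac12\gamma^2\rho_X(\Omega)$ of the forward drift against the $\rho_X(\Omega)$ term produced by the identity turns the reverse drift into precisely that of \eqref{eq:SDEbackward_app}.

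Item~3 is then immediate: every drift and diffusion term of \eqref{eq:SDEforward_app}--\eqref{eq:SDEbackward_app} is a sum over the basis index $i$ of a scalar times $\Pi_{A_i}(\bx)$ (the Casimir term being $\sum_i A_i(A_i\bx)$), so one increment of the dynamics acts as $\bx\mapsto\bigl(\prod_i e^{\Delta\tau_i A_i}\bigr)\bx$, which by Condition~3 is order-independent and transports $\bx$ infinitesimally along each flow $\xi_{A_i}$ separately --- the asserted decomposition into a direct sum of $\g$-infinitesimal actions \eqref{eq:fund_vectorfields}.

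The step I expect to be the main obstacle is the careful use of Condition~3 inside item~1: it asserts commutativity of the \emph{fundamental vector fields on $X$}, not of the $A_i$ inside $\g$, and one must argue that this alone both makes $\Phi$ well defined and yields $\partial_{\tau_i}\Phi=\Pi_{A_i}(\Phi)$ together with the symmetry of the mixed partials invoked in Itô's lemma; as the $\so(3)$ example of Appendix~\ref{appendix:spherical-so(3)-dilation} shows, the flows can commute while the matrices do not, in which case $\partial_{\tau_i}^2\Phi$ and $\rho_X(\Omega)$ must be understood as the sum of flow accelerations $\sum_i\nabla_{\Pi_{A_i}}\Pi_{A_i}$ rather than literally $\sum_i A_i^2\bx$. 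The remainder --- the divergence identity above and the sign bookkeeping in the time-reversal formula --- is routine but error-prone.
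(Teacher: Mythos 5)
Your proof is correct and follows essentially the same route as the paper: the paper proves item~3 by applying It\^o's lemma to the inverse map $\btau(\bx)$ (showing the correction term vanishes) and item~1 by Taylor-expanding the claimed solution, which together are equivalent to your single application of It\^o's lemma to $\bx=\Phi(\btau)$; for item~2 both you and the paper invoke Anderson's time-reversal formula together with the same identity $\nabla\cdot(\bPi\bPi^\top)=\rho_X(\Omega)+\bPi\,(\nabla^\top\!\cdot\bPi)$. Your flagged concern about Condition~3 is a genuine and perceptive one: the paper's proof of item~1 explicitly invokes $[A_i,A_j]=0$ at the matrix level, which is stronger than what the hypotheses of Section~\ref{ss:g_conditions} provide (commutativity of the $\cL_{A_i}$ as operators on $X$, as the $\so(3)$ example of Appendix~\ref{appendix:spherical-so(3)-dilation} shows), so your proposed reading of $\rho_X(\Omega)$ as $\sum_i\nabla_{\Pi_{A_i}}\Pi_{A_i}$ is the honest way to state the Casimir term in full generality.
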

\begin{proof}
    We start by proving 3. We start by rewriting \eqref{eq:SDEforward_app} in terms of the fundamental flow coordinates
    $\tau_i=\xi_{A_i}^{-1}(\bx_0)(\bx):X\rightarrow \R$. For this we employ It\^{o}'s Lemma for the multivariate case: given the SDE \eqref{eq:SDEforward_app} and a transformation
    $\btau(\bx)$, it is given by
    \begin{align}
        \label{eq:Itos-lemma}
        d\btau(\bx) & = (\nabla_\bx \btau)^\top\left[\beta(t)\bPi(\bx)\bbf(\bx) + \frac{\gamma^2(t)}2 \rho_X(\Omega) \right] dt + \frac{\gamma^2(t)}2 \Tr\left[\bPi(\bx)^\top \left( H_\bx \tau\right) \bPi(\bx)\right]dt\nonumber\\
        &\quad
        +\gamma(t)(\nabla_\bx \btau)^\top \bPi(\bx) d\bW \nonumber\\
        & = \beta(t)\bbf(\bx) +  \frac{\gamma^2(t)}2\left[ (\nabla_\bx \btau)^\top \Delta_{\btau} \bx + \Tr\left[\bPi(\bx)^\top \left( H_\bx \btau\right) \bPi(\bx)\right]\right] dt
        +\gamma(t) d\bW 
    \end{align}
since $\nabla_\bx \btau = \bPi^{-1}(\bx)$ as matrices. Now, the second term can be rewritten in components as
\begin{align}
    &\left\{(\nabla_\bx \btau)^\top \Delta_{\btau} \bx + \Tr\left[\bPi(\bx)^\top \left( H_\bx \tau\right) \bPi(\bx)\right]\right\}_{k,l}\nonumber\\
    &\qquad\qquad=\sum_{i}\sum_j \frac{\p x_j}{\p \tau_k}\left(\frac{\p}{\p x_j} \frac{\p x_i}{\p \tau_l} \right) \frac{\p \btau}{\p x_i}
    +\sum_i \sum_{j} \frac{\p x_j}{\p \tau_k}\frac{\p x_i}{\p \tau_l} \frac{\p^2 \btau}{\p x_i \p x_j}\nonumber\\
    &\qquad\qquad=\sum_j \frac{\p x_j}{\p \tau_k}\frac{\p}{\p x_j} \left( \sum_i\frac{\p x_i}{\p \tau_l}\frac{\p \btau}{\p x_i} \right)  \nonumber\\
    &\qquad\qquad=\frac{\p }{\p\tau_k} \left( \sum_i\frac{\p x_i}{\p \tau_l}\frac{\p \btau}{\p x_i} \right) \nonumber\\
    &\qquad\qquad=\frac{\p }{\p\tau_k} \frac{\p \btau}{\p \tau_l} \nonumber\\
    &\qquad\qquad= H_{\btau}\btau~,
\end{align}
which vanishes. Thus we proved that 
\begin{align}
\label{eq:tau_SDE}
        d\btau(\bx) 
        & = \beta(t)\bbf(\bx)dt +\gamma(t) d\bW ~,
    \end{align}
and provided that $\bf$ is chosen so that $f_i(\bx(\btau)) = f_i (\tau_i)$, this corresponds to a system of independent SDEs, as claimed.

Now, to prove 1, let $\btau(t)$ be a solution to \eqref{eq:tau_SDE} and $\bx(t)$ as in \eqref{eq:SDE_solution}. Then a Taylor expansion yields
\begin{align}
    \bx(t) = \left[ I + \sum_i \tau_i(t) A_i + \frac{1}{2} \left(\sum_i \tau_i(t) A_i\right)^2 + \cO(\tau_i^3)  \right] \bx(0) 
\end{align}
since $[A_i, A_j]=0$ and where $\cO(\tau_i^3)$ represents terms of third order in $\tau_i$'s. Then taking the differential and dropping higher order terms 
\begin{align}
    d\bx(t) &= \left[ \sum_i d\tau_i(t) A_i + \frac{1}{2} \left(\sum_id \tau_i(t) A_i\right)^2   \right] \bx(0) \nonumber\\
    &=\left[ \sum_i \left[\beta(t)\bbf(\bx)dt +\gamma(t) d\bW \right]A_i + \frac{1}{2} \left(\sum_i \left[\beta(t)\bbf(\bx)dt +\gamma(t) d\bW A_i\right] \right)^2   \right] \bx(0) \nonumber\\
    &=\left[\beta(t)\bPi(\bx)\bbf(\bx)dt +\gamma(t) \bPi(t)d\bW \right] + \frac{1}{2} \left(\sum_i \gamma(t) d\bW A_i \right)^2    \bx(0) \nonumber\\
    &= \left[\beta(t)\bPi(\bx)\bbf(\bx)dt +\gamma(t) \bPi(t)d\bW \right] + \frac{\gamma(t)^2}{2} \left(\sum_i  A_i^2 dt \right)   \bx(0) \nonumber\\
     &= \left[\beta(t)\bPi(\bx) \bbf(\bx) +\frac{\gamma(t)^2}{2} \rho_X(\Omega) \right]dt  + \gamma(t) \bPi(\bx) d\bW~,
\end{align}
which in the forward SDE \eqref{eq:SDEforward}, proving our claim, where we used the relations $dW_i^2=dt$ and $dW_idW_j=0$ for $j\neq i$. 

Finally, we prove 2. To do this it suffices to apply Anderson's result \citep{anderson1982reverse}
\begin{align}
    d\bx &= \left[ \beta(t)\bPi_i(\bx)\bbf(\bx)  + \frac{\gamma^2(t)}2 \rho_X(\Omega)
     -\gamma^2(t)\nabla \cdot (\bPi(\bx)\bPi(\bx)^\top)\right. \nonumber\\
     &\qquad\qquad\qquad\left.- \gamma(t)^2\bPi(\bx)\bPi(\bx)^\top \nabla_\bx \log p_t(\bx)\right] 
           dt +\gamma(t) \bPi(\bx)  d\bW_i ~,
\end{align}
and note that $\bPi(\bx)^\top \nabla_\bx =\bcL$, the generalized score, and
\begin{align}
    \left[\nabla_{\bx} \cdot (\bPi(\bx)\bPi(\bx)^\top)\right]_{i} &= \frac{\p}{\p x_k} \left( \Pi_{ij}\Pi_{kj} \right)\nonumber\\
    &=\frac{\p}{\p x_k} \left( \Pi_{ij}\right) \Pi_{kj}  +   \Pi_{ij}\frac{\p}{\p x_k} \Pi_{kj} \nonumber\\
    &=\frac{\p x_k}{\p \tau_j}\frac{\p}{\p x_k} \left(\frac{\p x_i}{\p \tau_j}\right)   +   \Pi_{ij}[\nabla^\top \cdot \bPi(\bx)]_{j} \nonumber\\
    &=\frac{\p}{\p \tau_j} \left(\frac{\p x_i}{\p \tau_j}\right)   +   \Pi_{ij}[\nabla^\top \cdot \bPi(\bx)]_{j} \nonumber\\  
    &=[\Tr H_{\btau} (\bx) ]_i   +   \Pi_{ij}[\nabla^\top \cdot \bPi(\bx)]_{j} 
\end{align}
where we recall that the divergence of a matrix is a vector whose components are the divergence of its rows. Recalling the relationship between the trace of the Hessian and the Laplacian we can write in operator form
\begin{align}
    \nabla_{\bx} \cdot (\bPi(\bx)\bPi(\bx)^\top) & = \bPi(\bx) \nabla^\top \cdot \bPi(\bx) + \rho_X(\Omega)~,
\end{align}
Plugging this back in into the previous expression we obtain our claim
\begin{align}
    d\bx &= \left[ \beta(t)\bPi(\bx)\bbf(\bx)  - \frac{\gamma^2(t)}2 \rho_X(\Omega)
     -\gamma^2(t) \bPi(\bx) \nabla^\top \cdot \bPi(\bx)\right. \nonumber\\
     &\qquad\qquad\qquad\qquad\qquad\qquad \left.- \gamma(t)^2\bPi(\bx)\bcL \log p_t(\bx)\right] 
           dt +\gamma(t) \bPi(\bx)  d\bW ~.
\end{align}
\end{proof}

\section{Experiments}
\label{app:experiments}

\paragraph{Practical implementation.}
In this section we list practical implementations for training and inference of our proposed Algorithm \ref{alg:training} and Algorithm \ref{alg:sampling} assuming a variance-preserving SDE for the flow-coordinates, see Eq. \eqref{eq:sde-tau-standard},  because we know that this standard SDE is exactly solvable and related to the forward SDE in Cartesian space as stated in the main Theorem \ref{th:main_theorem}.

The implementation showcase the examples for ${G_0}=(\SO(2)\times \R_+)$ (see the second paragraph in \ref{subsection:examples}) for data living in $\bx \in \R^2$ and $G_1=(\SO(3)\times \R_+)$ for $\bx \in \R^3$ from Appendix \ref{appendix:spherical-so(3)-dilation}.

The flow-maps for $G_0$ and $G_1$ can be computed by leveraging the bijection from Cartesian to polar $\btau = (r, \theta)$ for $G_0$ and spherical $\btau= ( r, \theta, \phi)$ for $G_1$, respectively. As stated in the main text and Appendix, we obtain
\begin{align*}
\begin{array}{c}
    M_{G_0}(\bx) = 
    \begin{pmatrix}
        \sqrt{x^2 + y^2} \\
        \arctan(\frac{y}{x})
    \end{pmatrix} \\
\end{array},
\quad \quad
\begin{array}{c}
    M^{-1}_{G_0}(\btau) = 
    \begin{pmatrix}
        r\cos(\theta) \\
        r\sin(\theta)
    \end{pmatrix} \\
\end{array}.
\end{align*}
As mentioned in $\eqref{eq:sde-2d-ob}$, the Lie algebra basis are ${A}_r = \boldsymbol{I}$ and ${A}_\theta=\begin{pmatrix}
    0 & -1\\1 & 0
\end{pmatrix}$, yielding a quadratic Casimir operator  ${A}_r^2 + {A}_\theta^2 = \mathbf{0}$, such that the dynamics induced by the Casimir elements in line 6 in Alg. \ref{alg:sampling} vanishes, i.e. $\mathbf{v}_c = \mathbf{0}$. The dynamics induced by the divergences (line 7 in Alg. \ref{alg:sampling}) returns $\nabla \cdot {A}_r \bx = \nabla \cdot \bx = \sum_{i=1}^2 \frac{\partial}{\partial x_i} x_i = \sum_{i=1}^{2} 1  = 2$ and $\nabla \cdot {A}_\theta \bx = \nabla \cdot (-x_2, x_1)^\top = \frac{\partial}{\partial x_1} (-x_2) + \frac{\partial}{\partial x_2} x_1 = 0$. Therefore, the divergence dynamics returns the velocity component $\mathbf{v}_d = 2{A}_r \bx + 0 {A}_\theta \bx = 2\bx$.

For $G_1$ the bijection to flow- and Cartesian coordinates is well-known as 
\begin{align*}
\begin{array}{c}
    M_{G_1}(\bx) = 
    \begin{pmatrix}
        \sqrt{x^2 + y^2 + z^2} \\
        \arctan(\frac{\sqrt{x^2 + y^2}}{z}) \\
        \arctan(\frac{y}{x})
    \end{pmatrix} \\
\end{array},
\quad \quad
\begin{array}{c}
    M^{-1}_{G_1}(\btau) = 
    \begin{pmatrix}
        r\sin(\theta)\cos(\phi) \\
        r\sin(\theta)\sin(\phi) \\
        r \cos(\theta)
    \end{pmatrix} \\
\end{array}.
\end{align*}

With ${A}_r = \boldsymbol{I}$, ${A}_\theta = (\cos\phi A_y - \sin \phi A_x) $ and $A_\phi = A_z$ as defined in \eqref{eq:so(3)-theta_0}-\eqref{eq:so(3)-basis_Ai}, the quadratic Casimir elements $A_i^2$ are left multiplied with the vector representation $\bx$, we can distinguish each group component as follows
\begin{align*}
    A_r^2 \bx &= \boldsymbol{I}^2\bx  = \boldsymbol{I}\bx = \bx = (x_1, x_2, x_3)^\top \\
    A_\theta^2 \bx &=  ( \cos^2(\phi) A_{y}^2 + \sin^2(\phi)A_{x}^2 - \cos\phi\sin\phi A_yA_x - \cos\phi\sin\phi A_x A_y )   \bx = -\bx = -(x_1, x_2, x_3)^\top \\
    A_\phi^2 \bx &= A_z^2 \bx = -(x_1, x_2, 0)^\top,
\end{align*}
defining the Casimir dynamics in line 6 in Algorithm \ref{alg:sampling}.

The dynamics induced by the divergences are computed in the same manner as shown in the SO(2) examples. Specifically, we obtain the (scalar) divergences
\begin{align*}
    & \nabla \cdot A_r\bx = \nabla \cdot \bx = 3 \\
    & \nabla \cdot A_\phi\bx = \nabla \cdot A_z \bx =  \nabla\cdot(-x_2, x_1, 0) = \frac{\partial}{\partial{x_1}} (-x_2) + \frac{\partial}{\partial x_2}x_1 + \frac{\partial}{\partial x_3} 0 = 0 \\
    & \nabla \cdot A_\theta\bx = \frac{x_3}{\sqrt{x_1^2 + x_2^2}},
\end{align*}
where the last divergence is point dependent.

In practice, it suffices to compute the quadratic Casimir elements directly using GPU-accelerated frameworks when these are point dependent as in $A_\theta$, or pre-compute them should they be constant matrices. The divergences can be computed using automatic differentiation libraries from modern deep learning frameworks.
\paragraph{Experimental details.}
In this final section we present some further details regarding our experiment in Section \ref{s:experiments}. 

\subsection{2D and 3D toy datasets}\label{sec:appendix-toy}

\begin{table}[htbp]
\centering
\caption{Comparison of GSM and Fisher Score matching on 2D and 3D synthetic datasets. Best results are in bold. When numbers are two close we consider them on par. }
\label{tab:method_comparison_appendix_toy}
\begin{tabular}{@{}lcc@{}}
\toprule
\textbf{Dataset} & \textbf{Group} & \textbf{W2} \\
\midrule
MoG (2D) & $\SO(2)\times\mathbb{R}^+$ & 0.34   \\
MoG (2D) & $T(2)$ & $\boldsymbol{0.15}$   \\
\midrule
Concentric Circles (2D) & $\SO(2)\times\mathbb{R}^+$ & $\boldsymbol{0.19}$  \\
Concentric Circles (2D) & $T(2)$ & $\boldsymbol{0.17}$  \\
\midrule
Line (2D) & $\SO(2)\times\mathbb{R}^+$ & $\boldsymbol{0.33}$   \\
Line (2D) & $T(2)$ & 0.56   \\
\midrule
MoG (3D) & $\SO(2)\times\mathbb{R}^+$ & $\boldsymbol{0.40}$  \\
MoG (3D) & $T(3)$ & $\boldsymbol{0.44}$  \\
\midrule
Torus (3D) & $\SO(3)\times\mathbb{R}^+$ & $\mathbf{0.14}$  \\
Torus (3D) & $T(3)$ & 0.35  \\
\midrule
Möbius Strip (3D) & $\SO(3)\times\mathbb{R}^+$ & $\boldsymbol{0.06}$  \\
Möbius Strip (3D) & $T(3)$ & 0.16  \\
\bottomrule
\end{tabular}
\end{table}
We perform a quantitative evaluation using the Wasserstein-2 (W2) distance on synthetic 2D and 3D datasets, comparing standard (Fisher) score matching ($G=T(2)$, $G=T(3)$) with our proposed approach based on Lie groups ($G=\SO(2) \times \mathbb{R}^+$) and ($G=\SO(3) \times \mathbb{R}^+$). A strong bias in such experiments arises from the similarity between the prior and target distributions. The considered toy datasets are often symmetric with respect to the origin in $\mathbb{R}^{2,3}$, as in the standard Gaussian prior in Fisher score matching.
The similarity of the prior distribution to the target one affects decisively the performance of the generating process.

To account for this, we report a normalized W2 metric, dividing the W2 distance between samples and target by the W2 distance between target and the corresponding priors. We observe that generalized score matching (GSM, ours) performs on par or better in most datasets, particularly where symmetry provides a clear inductive bias as indicated in Table \ref{tab:method_comparison_appendix_toy}. In the MoG datasets, standard (Fisher) score matching ($G=T(N)$) outperforms the Lie group model ($G=\SO(N) \times \mathbb{R}^N$), which is expected since no rotational symmetry is present, while translation symmetry effectively helps locate the Gaussian modes. The performance gap becomes even more pronounced in 3D, where GSM shows stronger advantages. We hypothesize that in higher dimensions, memorizing the target distribution becomes more difficult, and models that incorporate symmetry more explicitly benefit increasingly from this inductive bias.

\subsection{MNIST}\label{sec:appendix-mnist}

We parametrize the noising process through the SDE 
\begin{align}\label{eq:mnist-sde}
    d\btau =  \sqrt{\beta(t)}d\bW~,
\end{align}
where we set the drift term to zero. Notice that this choice is consistent with a 
2d-rotation of a function over the grid $\bx_{i,j}$, given by $f(\bx_{i,j})=f_{i,j}$, denoting the value of the pixel of image $f$ at the location $i,j$.
We train a convolutional neural network (CNN) with three convolutional layers followed by fully connected layers that outputs a single value, being the score for the flow coordinate $\tau$. For the specific details of the implementation we refer to the code-base in the SI. 
In sampling, we apply a smoothing function to compensate interpolation artifacts due to rotations on a discretized grid.
We choose $T=100$ time-steps in training but only need $T=10$ time-steps during sampling. 

\subsubsection{BBDM}
We implement the Brownian Bridge Diffusion Model (BBDM) \citep{Li_2023_CVPR} and train it on the rotated MNIST dataset. The BBDM operates on the full pixel space $\R^{784}$ of the $28\times28$ MNIST digits and indicates a continuous time stochastic process conditioned on the starting $\bx(0)$ and end point $\bx(T)$ which are pinned together as paired data. In this case, we assume $\bx(T)\sim p(\bx_T)$ to be a randomly augmented MNIST digit obtained from an original MNIST digit $\bx(0).$ During training, we sample an intermediate point $\bx(t) \sim N(x_t | \mu_t(x(0), x(T)), \Sigma_t)$ where the mean function $\mu_t(t)(\bx(0), \bx(T))$ is a linear interpolation between the endpoints ($\bx(0), \bx(T))$ and use the score-network to predict the original data point $\widehat{\bx}(0) = s_\theta(x_t, t, x_T)$ as opposed to the noise or difference paramterization proposed in the original BBDM paper. We noticed that predicting the original data point led to better sampling quality including the inductive bias that MNIST digits are represented as binary tokens. Furthermore, we observe that the sampling quality is also better when the prior image $x_T$ is input as context into the score network, enforcing a stronger signal throughout the trajectory. As opposed to our model, we trained the BBDM on $T=1000$ diffusion timesteps using the $\sin$-scheduler from BBDM.

To evaluate the quality of generated MNIST samples, we train a convolutional neural network (CNN) classifier to predict digit labels. This classifier provides a reliable metric for assessing the reconstruction accuracy by comparing the predicted labels of original (unrotated) and generated (rotated) images.
The architecture consists of two convolutional blocks, followed by fully connected layers to predict the 10 MNIST digit labels. All convolutional layers use 3x3 kernels with padding 1, and max pooling uses 2x2 kernels. The model is trained using Adam optimizer (lr=0.001), cross-entropy loss, batch size 64, for 10 epochs on the standard MNIST training set (60,000 samples). The trained classifier achieves greater than 99\% accuracy on the MNIST test set, providing a reliable metric for evaluating reconstruction quality.

To calculate the FID scores, we extract the embedding after the second convolutional block.

\subsection{QM9 \& CrossDocked2020}
\paragraph{QM9.}
The conformer generation tasks is about learning a conditional probabilistic map $\bx \sim p_\theta(X|M)$, where $\bx \in \R^{3N}$ for a molecule with $N$ atoms.
We implement a variant of EQGAT \citep{le2022representation} as neural network architecture where input features for the nodes consist of atom types and atomic coordinates, while edge features are encoded to indicate the existence of a single-, double, triple or aromatic bond based on the adjacency matrix. We use $L=5$ message passing layers with $s_\text{dim}=128~, v_\text{dim}=64$ scalar and vector features, respectively. To predict the scores for each atom, we concatenate the hidden scalar and vector embeddings $s\in \R^{128}~, v\in \R^{3\times64}$ into one output embedding $o=\R^{128+3*64}$ which is further processed by a 2-layer MLP with three output units. Notice that the predicted scores per atom are neither invariant nor equivariant since the scalar and vector features are transformed with an MLP.

We choose the drift $f$ with its scaling $\beta$ and the diffusion coefficients $\gamma$ in such way that the forward SDE for the flow coordinates $\btau$ in \eqref{eq:forward-sde-flow-coordinates} has the expression
\begin{align}\label{eq:sde-tau-standard}
    d\btau = -\frac{1}{2}\beta(t) \btau dt + \sqrt{\beta(t)}d\bW~,
\end{align}
where for clarity we have omitted the dependency between the flow coordinates and the original data in Cartesian coordinates, i.e. $\btau(\bx)$, since the coordinate transformations with Lie algebra representation are described in \ref{appendix:spherical-so(3)-dilation}. The forward SDE in \eqref{eq:sde-tau-standard} is commonly known as \textit{variance-preserving} SDE \citep{song2020score}. We use the cosine scheduler proposed by \citet{dhariwal_guidance_cosine} and $T=100$ diffusion timesteps.

\paragraph{CrossDocked.}

For this experiment we adopt again an SDE of the form \eqref{eq:sde-tau-standard} for the three $\SO(3)$ flow coordinates $\theta_1, \varphi_1, \varphi_2$ and the three
$T(3)$ center of mass Cartesian flow coordinates. 
The $\SO(3)$ flow coordinates are always computed and applied in the ligand center of mass. In this way there is no ambiguity regarding the non-commutativity of $\SE(3)$, as rotation around the origin commute with translations of the system. 
We train a variant of EQGAT as in the QM9 case, but now including also node and edge features of the protein pocket. Specifically, the adjacency matrix for the GNN is computed dynamically at each time step, according to the relative distance between ligand and protein. For this, we choose a cut-off of 5 \AA. We also use in this experiment a cosine scheduler and $T=100$ diffusion timesteps.
Since this learning problem is 6-dimensional, we aggregate the last layer's node embeddings from the ligand atoms into a global representation through summation.  This embedding is fed as input into a 2-layer MLP to predict the six scores. 

\subsubsection{RSGM and BBDM n CrossDocked}
\label{app:rsgm}
\paragraph{RSGM}
We utilized the framework of Riemannian Score-Based Generative Models (RSGM) by \citep{de2022riemannian} to model rigid-body motions on $G = (\SO(3) \times T(3))$, in similar fashion to \citep{corso2022diffdock, pmlr-v202-yim23a} by choosing a variance exploding SDE for the rotation dynamics and variance preserving SDE for the global translations. The terminal distribution for the rotation is designed to converge to an isotropic Gaussian distribution on SO(3) \citep{leach2022denoising}, while the terminal distribution for the translation component converges to an isotropic Gaussian in $\R^3$. To obtain the tractable scores for rotation and translation, we use the code by the authors from DiffDock and SE(3)-Diffusion for Protein Backbone Modeling in \url{https://github.com/gcorso/DiffDock/blob/main/utils/so3.py} and \url{https://github.com/jasonkyuyim/se3_diffusion/blob/master/data/se3_diffuser.py} and make sure that the score outputs for rotation and translation are SO(3) equivariant using the same EQGAT model architecture. The (variance-preserving) scheduler for the translation dynamics is chosen in similar fashion to our experiment using the cosine scheduler, while the (variance-exploding) scheduler for the rotation dynamics is implemented as an linear increasing sequence in $\log_{10}$ space with $\sigma_{min} = 0.001$ and $\sigma_{max}=2.0$ and $T=100$ discretized diffusion steps as $\sigma(t) = 10^{t}$ for $t \in (\log_{10}(\sigma_{min}), \log_{10}({\sigma_{max}}))$.

\paragraph{BBDM} In similar fashion to the MNIST experiment, we train and evaluate a standard Euclidean diffusion model on CrossDocked2020 for rigid docking. 
We sample a rotated ligand endpoint $x_T$ using the Riemannian Score-Based Generative Models (RSGM) scheduler, with the original ligand $x_0$, and sample intermediates as $x_t = m_tx_0 + (1-m_t)x_T + \sigma_t \epsilon$, where $\epsilon \sim N(0, I)$ and $m_t=\frac{t}{T}, \sigma_t=2(m_t - m_t^2)$ using $T=100$ diffusion steps. As the Euclidean baseline, we train an equivariant Fisher score network with $3N$ degrees of freedom to predict the ground-truth pose $\hat{x}_0$. In this setting intermediate perturb ligand coordinates $x_t$ do not resemble ligands due to the linear interpolation and addition of Gaussian noise, while the learning task is to predict the ground-truth ligand coordinate, given the static protein pocket. The output prediction head in BBDM is $3N$, compared to to GSM and RSGM which model $6$ dimensions accounting for global rotation and translation. 

To compare all modeling approaches with respect to the dynamics using the same network architecture, we perform 5 dockings per protein-ligand complex in the CrossDocked test dataset comprising 100 complexes and compute the mean RMSD between ground-truth coordinates and predicted coordinates.

\section{Lie group-induced flow matching modeling}
\label{app:flow_matching}

\begin{wrapfigure}{r}{0.30\textwidth}
    \centering
    \includegraphics[width=0.28\textwidth]{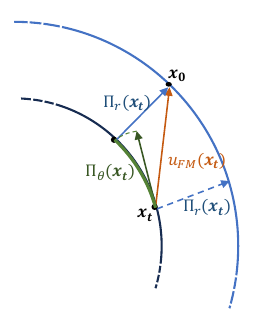}
    \caption{$\so(2)$ (green and blue) vs. $\mathfrak{t}(2)$ (orange) induced flows.}
    \label{fig:flow}
\end{wrapfigure}
We briefly summarize the formalism of flow matching. Given a target distribution 
$p_0(\bx)$ and a vector field $u_t$ generating the distribution $p_t(\bx)$, i.e., if it satisfies $p_t(\bx) = [u_t]_* p_0(\bx)$ where $[u_t]_*$ is the push-forward map, 
the flow matching objective is defined as
\begin{align}
   \cL_{\text{FM}}(\theta) = \E_{t,\bx_t\sim p_t(\bx)}\left| v_{t;\theta}(\bx_t) -u_t(\bx_t)\right|^2.
\end{align}

Marginalizing over samples $\bx_0\sim p_0(\bx)$ we obtain
the conditional flow matching objective
\begin{align}
   \cL_{\text{CFM}}(\theta) = \E_{t,\bx_0\sim p_0(\bx),\bx_t\sim p_t(\bx|\bx_0)}\left| v_{t;\theta}(\bx_t) -u_t(\bx_t|\bx_0)\right|^2.
\end{align}
Now, under the assumptions for learning the generalized score through the objective \ref{eq:score_objective} 
we have that $p_t(\btau(\bx)) = \mathcal{N}(\boldsymbol{\tau} | \bmu(\btau(0), t), \bSigma(t))$, where $\btau(0) = \bx(\btau)(0)$. Then the solution of the SDE from Theorem \ref{th:main_theorem}
\begin{align}
    \bx(t) = \left( \prod_{i=1}^ne^{\tau_i(t) A_i} \right) \bx(0)~, 
\end{align}
is a flow inducing the distribution $p_t(\btau(\bx))$. Thus, 
the vector field that generates the conditional probability path is obtained by differentiating the path above with respect to $t$, yielding
\begin{align}
    u_t(\bx(t)|\bx(0)) = \frac{d}{dt}\bx(t) & = \sum_i \frac{\p \bx(t)}{\partial\tau_i} \frac{\p \tau_i}{\partial t}  \nonumber\\
    &=\sum_i A_i \bx(t) \left(\mu_{t,i}'(\tau_i(0)) + \sigma_{t}'(\tau_i(0)) \eta_i \right)  \nonumber\\
        &= \bPi(\bx(t)) \left[ \bmu_{t}'(\tau_i(0)) + \frac{\sigma_t'(\tau_i(0))}{\sigma_t(\tau_i(0))} \left( \btau(t) - \bmu_t(\btau(0)) \right)\right]~,
\end{align}
where we used the fact that 
\begin{align}
    \btau(t) = \mu_t(\btau(0)) + \sigma_t(\btau(0)) \bbeta~,
\end{align}
where $\bbeta \sim \cN(\boldsymbol{0}, \boldsymbol{1})$.
Thus, we see that the unique vector field that defines the flow \eqref{eq:SDE_solution} is again proportional to the 
fundamental vector field $\bPi(\bx)$ of the Lie algebra $\g$ of $G$. In figure \ref{fig:flow} we illustrate the flow generated by our formalism in the case of $\SO(2)$ in comparison with the traditional flow matching of $T(2)$. The orange path depicts the linear (in Euclidean metric) displacement given by the traditional flow matching, assuming $G=T(2)$. In green and blue we depicted the orbits trajectories resulting from generalized flow matching with $G=SO(2)\times \R_+$.  Although the start and end points are the same, the path is decomposed into transformations along the orbits of the two group factors. This is particularly useful when these correspond to meaningful degrees of freedom in the system. For example, when flowing between conformers of the same molecule, the intermediate states produced by traditional flow matching are often unphysical, as they involve linear interpolation between the Cartesian coordinates of the atoms. However, generalized score matching, following the degrees of freedom given by bond and torsion angles as described in Section \ref{ss:torsion}, would not only yield efficient learning but also produce chemically meaningful intermediate states, as the path is broken down into updates of chemically relevant degrees of freedom.

\end{document}